\newtheorem{thm}{Theorem}[]
\newtheorem{rem}{Remark}[]
\definecolor{dark-red}{rgb}{0.4,0.15,0.15}
\definecolor{dark-blue}{rgb}{0,0,0.7}
\let\oldnl\nl%
\newcommand{\nonl}{\renewcommand{\nl}{\let\nl\oldnl}}%
\DeclarePairedDelimiter\autobracket{(}{)}
\newcommand{\br}[1]{\autobracket*{#1}}
\DeclarePairedDelimiter\autobrackett{[}{]}
\newcommand{\brr}[1]{\autobrackett*{#1}}
\def\eqref#1{equation~\ref{#1}}
\def\1{\bm{1}}
\newcommand{\data}{\mathcal{D}}
\DeclareMathAlphabet{\mathsfit}{\encodingdefault}{\sfdefault}{m}{sl}
\SetMathAlphabet{\mathsfit}{bold}{\encodingdefault}{\sfdefault}{bx}{n}
\def\gS{{\mathcal{S}}}
\def\gV{{\mathcal{V}}}
\newcommand{\E}{\mathbb{E}}
\newcommand{\R}{\mathbb{R}}
\DeclareMathOperator*{\argmin}{arg\,min}
\title{OPERA: Automatic Offline Policy Evaluation with Re-weighted Aggregates of Multiple Estimators}
\author{%
   Allen Nie$^1$ \quad Yash Chandak$^{1}$ \quad Christina J. Yuan$^2$ \\ \textbf{Anirudhan Badrinath}$^1$ \quad \textbf{Yannis Flet-Berliac}$^3$ \quad \textbf{Emma Brunskill}$^1$\footnotemark \vspace{1mm}\\
   $^1$Computer Science, Stanford University  \\
   $^2$Computer Science, University of Texas, Austin\\ 
   $^3$Cohere \\
}
\begin{document}

\maketitle

\begin{abstract}
Offline policy evaluation (OPE) allows us to evaluate and estimate a new sequential decision-making policy's performance by leveraging historical interaction data collected from other policies. Evaluating a new policy online without a confident estimate of its performance can lead to costly, unsafe, or hazardous outcomes, especially in education and healthcare. Several OPE estimators have been proposed in the last decade, many of which have hyperparameters and require training. Unfortunately, choosing the best OPE algorithm for each task and domain is still unclear. In this paper, we propose a new algorithm that adaptively blends a set of OPE estimators given a dataset without relying on an explicit selection using a statistical procedure. We prove that our estimator is consistent and satisfies several desirable properties for policy evaluation. Additionally, we demonstrate that when compared to alternative approaches, our estimator can be used to select higher-performing policies in healthcare and robotics. Our work contributes to improving ease of use for a general-purpose, estimator-agnostic, off-policy evaluation framework for offline RL.
\end{abstract}

\footnotetext{Emails: \{anie,ychandak,ebrun\}@cs.stanford.edu. $^3$Work done while at Stanford.}

\section{Introduction}

Offline reinforcement learning (RL) involves learning better sequential decision policies from logged historical data, such as learning a personalized policy for math education software~\citep{mandel2014offline,ruan2024reinforcement},  providing treatment recommendations in the  ICU~\citep{komorowski2018artificial,luo2024position} or learning new controllers for robotics\citep{kumar2020conservative,yu2020mopo}. 
Offline policy evaluation (OPE), in which the performance  $J(\pi_e)$ of a new evaluation policy $\pi_e$ is estimated given historical data, is a common subroutine in offline RL for policy selection, and can be particularly important when deciding whether to deploy a new decision policy that might be unsafe or costly.  Offline policy evaluation methods estimate the performance of an evaluation policy $\pi_e$ given data collected by a behavior policy $\pi_b$. %
There are many existing OPE algorithms, including those that create importance sampling-based estimators  (IS)~\citep{precup2000eligibility}, value-based estimators (FQE)~\citep{le2019batch}, model-based estimators~\citep{paduraru2013off,liu2018representation,dong2023model}, doubly robust estimators~\cite{jiang2016doubly,thomas2016data}, and minimax-style estimators~\citep{liu2018breaking,nachum2019dualdice,yang2020off}. 

This raises an important practical question: given a set of different OPE methods, each producing a particular value estimate for an evaluation policy, what value estimate should be returned? 
A simple approach is to avoid the problem and pick only one OPE algorithm or look at the direction of a set of OPE algorithms' scores as a coarse agreement measure. \citet{voloshin2021empirical} offered heuristics based on high-level domain structure (e.g., horizon length, stochasticity, or partial observability), but this does not account for  instance-specific  information related to the offline dataset adn policies.%

In this paper we seek to aggregate the results of a set of multiple off-policy RL estimators to produce a new estimand with low mean square error. This work is related to several streams of prior work: (1) multi-armed bandit and RL algorithms that combine two estimands to yield a more accurate estimate; (2) multi-armed bandit and RL algorithms that select a single estimand out of a set of estimands, and (3) stacked generalization / meta-learning / super learning methods in machine learning. 

Research in (1) builds on doubly robust (DR) estimation in statistics to produce an estimand that combines important sampling and model-based methods
~\citep{jiang2016doubly,gottesman2019combining,farajtabar2018more}.  Similarly, accounting for multiple steps, the MAGIC estimator blends  between IS-based and value-based estimators within a trajectory~\citep{thomas2016data}. The second line of work (2) does not combine scores but instead introduces an automatic estimator selection subroutine in the algorithm. However, such methods typically assume strong structural requirements on the input estimators. For example, \citet{su2020adaptive,tuckerimproved} assume as input a nested set of OPE estimators, where the bias is known to strictly decrease across the set. \citet{zhang2021towards} leveraged a set of Q-functions trained with fitted Q-evaluation (FQE) and cross-compare them in a tournament style until one Q-function emerged. 
None of these methods allow mix-and-match of different kinds of OPE estimators.

Our work is closest to a third line of more distant work, that of stacked generalization~\citep{wolpert1992stacked} / meta-learning and super learning across ensembles. There is a long history in statistics and supervised learning of combining multiple input classification or regression functions to produce a better meta-function. 
Perhaps surprizingly, there is little exploration of this idea to our knowledge in the context of RL or multi-armed bandits. The one exception we are aware of was for heterogeneous treatment effect estimation in a 2-action contextual bandit problem, where \cite{nie2021quasi} utilized linear stacking to build a consensus treatment effect estimate using two input estimatands. %

In this paper we introduce the meta-algorithm OPERA (\textbf{O}ffline \textbf{Policy} \textbf{E}valuation with \textbf{R}e-weighted \textbf{A}ggregates of Multiple Estimators). Inspired by a linear weighted stack, OPERA combines multiple generic OPE estimates for RL  in an ensemble to produce an aggregate estimate. Unlike in supervised learning where ground truth labels are available, in our setting a key choice is how to estimate the mean squared error of the resulting weighted ensemble. 
Under certain conditions,  bootstrapping~\citep{efron1992bootstrap} can approximate finite sample bias and variance. 
We use bootstrapping to compute estimates of the mean squared error of different weightings of the underlying input estimators, which can then be optimized as a constrained convex problem. OPERA can be used with any input OPE estimands. 
We prove under mild conditions that OPERA produces an estimate that is consistent, and will be at least as accurate as any input estimand. 
We show on several common benchmark tasks that OPERA achieves more accurate offline policy evaluation than prior approaches, and we also provide a more detailed analysis of the accuracy of OPERA as a function of choices made for the meta-algorithm.

\section{Related Work}

\paragraph{Offline policy evaluation}
Most commonly used offline policy estimators can be divided into a few categories depending on the algorithm. An important family of estimators focuses on using importance sampling (IS) and weighted importance sampling (WIS) to reweigh the reward from the behavior policy~\citep{precup2000eligibility}. These estimators are known to produce an unbiased estimate but have a high variance when the dataset size is small. For a fully observed Markov Decision Process (MDP), a model-free estimator, such as fitted Q evaluation (FQE), is proposed by \citet{le2019batch}, and one can also learn a model given the data to produce a model-based (MB) estimate~\citep{puaduraru2007planning,fu2021benchmarks,gao2023variational}. When the behavior policy's probability distribution over action is unknown, a minimax style optimization estimator (DualDICE) can jointly estimate the distribution ratio and the policy performance~\citep{nachum2019dualdice}. For a partial observable MDP (POMDP), many of these methods have been extended to account for unobserved confounding, such as minimax style estimation~\citep{shi2022minimax}, value-based estimation~\citep{tennenholtz2020off,nair2021spectral}, uses sensitivity analysis to bound policy value~\citep{kallus2020confounding,namkoong2020off,zhang2021non}, or learns useful representation over latent space~\citep{chang2022learning}. 

\paragraph{OPE with multiple estimators}
Choosing the right estimators has become an issue when there are many proposals even under the same task setup and assumptions. \citet{voloshin2021empirical} proposed an empirical guide on estimator selection. 
One line of work tries to combine multiple estimators to produce a better estimate by leveraging the strengths of the underlying estimators, for example, (weighted) doubly robust (DR) method~\citep{jiang2016doubly}. For contextual bandit, \citet{wang2017optimal} proposed a switch estimator that interpolates between DM and DR estimates with an explicitly set hyperparameter. For sequential problems, MAGIC blends a model-based estimator and guided importance sampling estimator to produce a single score~\citep{thomas2016data}.
Another line of work tackles the many-estimator problem by reformulating multiple estimators as one estimator. \citet{yang2020off} reformulated a set of minimax estimators as a single estimator with different hyperparameter configurations. \citet{yuan2021sope} constructed a spectrum of estimators where the endpoints are an IS estimator and a minimax estimator and proposed a hyperparameter to control the new estimator. 
This line of approaches does not leverage multiple estimators or solve the OPE selection problem because they recast the OPE selection problem as a hyperparameter selection problem.
The last line of work provides an automatic selection algorithm that chooses one estimator from many, relying on an ordering of estimators~\citep{tuckerimproved} or being able to compare the output (such as Q-values) directly~\cite{zhang2021towards}.

\paragraph{Bootstrapping for model selection} 
Using bootstrap to estimate the mean-squared error for model selection was initially proposed by \citet{hall1990using}, for the application of kernel density estimation.
The idea was subsequently used by others for density estimation~\citep{delaigle2004bootstrap}, selecting sample fractions for tail index estimation~\citep{danielsson2001using}, time-series forecasting~\citep{dos2019bootstrap} and other econometric applications~\citep{marchettinon}.
Similar ideas have been explored by \citet{thomas2015high} to construct a confidence interval for the estimator. We extend this idea to use bootstrapping to combine multiple OPE estimators to produce a single score.

\section{Notation and Problem Setup}

We define a stochastic Decision Process $M =$ $\langle \gS, A, T, r, \gamma \rangle$, where $\gS$ is a set of states; $A$ is a set of actions; $T$ is the transition dynamics; $r$ is the reward function; and $\gamma \in (0, 1)$ is the discount factor. Let $D_n = \{\tau_i\}_{i=1}^n = \{s_i, a_i, s_i', r_i\}_{i=1}^n$ be the trajectories sampled from $\pi$ on $M$. We denote the true performance of a policy $\pi$ as its expected discounted return $J(\pi) = \E_{\tau \sim  \rho_\pi}[G(\tau)]$ where $G(\tau) = \sum_{t=0}^{\infty} \gamma^t r_t$ and $\rho_\pi$ is the distribution of $\tau$ under policy $\pi$. In an off-policy policy evaluation problem, we take a dataset $D_n$, which can be collected by one or a group of policies which we refer to as the behavior policy $\pi_b$ on the decision process $M$. An OPE estimator takes in a policy $\pi_e$ and a dataset $D_n$ and returns an estimate of its performance, where we mark it as $\hat V: \Pi \times \data \rightarrow \R$. We focus on estimating the performance of a single policy $\pi$. We define the true performance of the policy $V^{\pi} = J(\pi)$, and multiple OPE estimates of its performance as $\hat V_i^{\pi}(D_n) = \hat V_i(\pi, D_n)$ for the $i$-th OPE's estimate.

\section{OPERA}
In this section, we consider combining results from multiple estimators $\{\hat V^{\pi}_i\}_{i=1}^k$ to obtain a better estimate for $V^{\pi}$.
Towards this goal, given $\{\hat V^{\pi}_i\}_{i=1}^k$, we propose  estimating a set of weights $\alpha_i^* \in \R$ such that $\bar V^{\pi} \coloneqq \sum_{i=1}^k \alpha_i^* \hat V^{\pi}_i \in \mathbb R$
has the lowest mean squared error (MSE) towards estimating $V^{\pi}$.
Formally, let $\hat{\mathcal{V}} \in \R^{k\times 1}$ be a vector whose elements correspond to values from different estimators, and let $\gV \in \R^{k \times 1}$ correspond to a vector where each element is the same and corresponds to $V^{\pi}$. Let $\alpha^* \in \mathbb R^{k\times 1}$ be a vector with values of all $\alpha_i^*$'s and let $\alpha \in \mathbb R^{k \times 1}$ be an estimate of $\alpha^*$.
For any estimator $\hat V^{\pi}_i$, the mean-squared error is denoted by,
\begin{align}
    \text{MSE}(\hat V^{\pi}_i) &\coloneqq \mathbb E_{D_n}\brr{\br{\hat V^{\pi}_i(D_n) - V^{\pi}}^2}  \in \mathbb R, \label{eqn:mse}
\end{align}
where we make $\hat V^{\pi}_i$ explicitly depend on $D_n$ to indicate that the expectation is over the random variables $\hat V^{\pi}_i$ which depend on the sampled data $D_n$.
With this formulation, estimating $\alpha^*$ can be elicited as a solution to the following constrained optimization problem.

\begin{rem} Let $\sum_{i=1}^k \alpha_i = 1$, then
    \thlabel{thm:alphamin}
    \begin{align}
        \alpha^*  &\in  \argmin_{\alpha \in \mathbb R^{k \times 1} } \quad \alpha^\top A \alpha
          \quad \emph{, where} \quad A \coloneqq 
       \E \brr{\br{\hat \gV - \gV} \br{\hat \gV - \gV}^\top} \in \mathbb R^{k \times k}.  \label{eqn:alphamin}
    \end{align}
\end{rem}
Using the fact that  $\sum_{i=1}^k \alpha_i = 1$, 
\begin{align}
    \text{MSE}\br{\bar V^{\pi}} &=  \E \brr{\br{\sum_{i=1}^k\alpha_i\hat V^{\pi}_i - V^{\pi}}^2 }
    = \E \brr{\br{\sum_{i=1}^k\alpha_i\br{\hat V^{\pi}_i - V^{\pi}}}^2 }. \label{eqn:alphamin1}
\end{align}
Now re-writing the equation above \eqref{eqn:alphamin1} in vector form,
\begin{align}
   \text{MSE}\br{\bar V^{\pi}}  &= \E \brr{ \br{\br{\hat \gV - \gV}^\top \alpha}^2} 
   = \E \brr{ \alpha^\top \br{\hat \gV - \gV} \br{\hat \gV - \gV}^\top \alpha}. \label{eqn:alphaminB} 
\end{align}
Finally, simplifying \eqref{eqn:alphaminB} further
\begin{align}
   \text{MSE}\br{\bar V^{\pi}} &=  \alpha^\top \E \brr{\br{\hat \gV - \gV} \br{\hat \gV - \gV}^\top} \alpha = \alpha^\top A \alpha. \label{eqn:alphaA}
\end{align}
Therefore, $\alpha$ that minimizes $\text{MSE}\br{\bar V^{\pi}}$ is equivalent to $\alpha$ that minimizes $\alpha^\top A \alpha$. 

It is worth highlighting that the optimization problem in \thref{thm:alphamin} is convex in $\alpha$ with linear constraint and thus can be solved by any off-the-shelf solvers \citep{diamond2016cvxpy}.

\paragraph{Estimating $A$: }
\label{sec:Aapprox}
An advantage of \thref{thm:alphamin} is that it provides the objective for estimating $\alpha^*$.
Unfortunately, this objective depends on $A$, and thus on $V^{\pi}$, which is not available.
Further, observe that $A$ can be decomposed as
\begin{align}
    A = \E \brr{\br{\hat \gV - \E\brr{\hat \gV}}\br{\hat \gV - \E\brr{\hat \gV}}^\top} + \brr{\E \brr{\hat \gV - \gV}\brr{\hat \gV - \gV}^\top}. \label{eqn:varbias}
\end{align}
where the first term corresponds to co-variance between the estimators $\{V^{\pi}_i\}_{i=1}^k$ and the second term corresponds to the outer product between their biases.
One potential approach for approximating $A$ could be to ignore biases. While this could resolve the issue of not requiring access to $\gV$,  ignoring bias can result in severe underestimation of $A$, especially in finite-sample settings or when function approximation is used. Further, even if we ignore the biases, it is not immediate how to compute the covariance of various OPE estimators, e.g., FQE.

We propose overcoming these challenges by constructing $\hat A \in \mathbb R^{k\times k}$, an estimate of $A \in \mathbb R^{k\times k}$,  using a statistical bootstrapping procedure \citep{efron1994introduction}. Subsequently, we will use the $\hat A$ as a plug-in replacement for $A$ to search for the values of $\alpha$ as discussed in \thref{thm:alphamin}. 
There is a rich literature on using bootstrap to estimate bias \citep{efron1990more,efron1994introduction,hong,shi,Mikusheva} and variance \citep{yenchivar,gamero1998bootstrapping,shao1990bootstrap,ghosh1984note, li1999bootstrap} of an estimator that can be leveraged to estimate the terms in \eqref{eqn:varbias}.
Instead of estimating the bias and variance individually, we directly use the bootstrap MSE estimate \citep{yenchi,williams,cao1993bootstrapping,hall1990using} to approximate $A$. %

For bootstrap estimation to work, two key challenges need to be resolved. Even if provided with $V^{\pi}$, the regular bootstrap is not guaranteed to yield an MSE estimate which is asymptotic to the true MSE if the distribution has heavy tails \citep{ghosh1984note}. Furthermore, $V^{\pi}$ is unknown in the first place.
To address these challenges we follow the work by \citet{hall1990using}, where the first issue is resolved by using sub-sampling based bootstrap resamples of size $n_1 < n$, where $n_1$ is of a smaller order than $n$. Therefore, we draw data $D_{n_1}^* = \{\tau_1^*, ..., \tau_{n_1}^*\}$ from  $D_{n} = \{\tau_1, ..., \tau_{n}\}$ with replacement. To resolve the second issue, we leverage the MSE estimate by \citet{hall1990using}, and approximate \eqref{eqn:mse} using
\begin{align}
      \widehat {\text{MSE}}(\hat V^{\pi}_i) &\coloneqq \mathbb E_{D_{n_1}^*} \left[\br{\hat V^{\pi}_i(D_{n_1}^*) - \hat V^{\pi}_i}^2 \middle |  D_n \right]. \in \mathbb R \label{eqn:est_mse}
\end{align}
Building upon this direction, we propose using the following estimator $\hat A$ for $A$,
 \begin{align}
        \hat A &\coloneqq 
       \mathbb E_{D_{n_1}^*} \brr{\br{\hat \gV(D_{n_1}^*) - \hat \gV} \br{\hat \gV(D_{n_1}^*) - \hat \gV}^\top \middle | D_n}, \in \mathbb R^{k \times k} \label{eqn:a_hat}
    \end{align}
and we substitute $\hat \alpha$ for $A$ in \eqref{eqn:alphamin} to obtain the weights for combining estimates $\{\hat V^{\pi}_i\}_{i=1}^k$%
\begin{align}
        \hat{\bar{V}}^{\pi} \coloneqq \sum_{i=1}^k \hat \alpha_i \hat V^{\pi}_i \in \mathbb R \quad 
        \text{where,} \quad \hat \alpha  \in  \argmin_{\alpha \in \mathbb R^{k \times 1} } \quad \alpha^\top \hat A \alpha \in \mathbb R^{k \times 1}. \label{eqn:obj}
\end{align}
There are two key advantages of the proposed procedure: (1) Bias: it does not require access to the ground truth performance estimates $\theta^*$. In the supervised learning setting, a held-out/validation set can provide a way to infer approximation error, However, for the OPE setting there is no such held-out dataset that can be used to obtain reliable estimates of the ground truth performance. 
(2) Variance: Depending on the choice of the estimator (e.g., FQE), it might not be possible to have a closed-form estimate of the variance, especially when using rich function approximators. 
Using statistical bootstrapping, OPERA mitigates both these issues and thus is particularly suitable for off-policy evaluation.

\paragraph{Estimating $\alpha^*$:}
We now consider how error in estimating the optimal weight coefficient $\alpha^*$ affects the MSE of the resulting estimator $\hat{\bar{V}}^{\pi}$. Without loss of generality, we consider $|\hat V^{\pi}_i| \leq 1$, since we can trivially normalize each estimator's output by $|V_{\max}|$. We now prove that under the mild assumption that the error in the estimated  $\hat{\alpha}$ can be bounded as some function of the dataset size, that we can bound the mean squared error of the resulting value estimate:

\begin{thm}[Finite Sample Analysis] 
Assume given $n$ samples in dataset $D$, and let $\Delta_c \coloneqq \mathbb E_{D_n}\brr{\br{\bar V^{\pi} - V^{\pi}}^2}$, there exists a $\lambda > 0$ such that
\begin{align}
&\forall i, \quad \mathbb E_{D_n}\brr{|\hat \alpha_i - \alpha_i^*|} \leq n^{-\lambda},\label{eqn:alpha-decay-main}\\
&\mathrm{MSE}(\hat{\bar{V}}^{\pi}) \leq  \frac{k^2}{n^{2\lambda}} + \Delta_c.
          \label{eqn:finite-sample}
\end{align}
    \label{thm:finite}
\vspace{-5mm}
\end{thm}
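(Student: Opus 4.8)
The statement packages a hypothesis with a conclusion: \eqref{eqn:alpha-decay-main} asserts that the plug-in weights $\hat\alpha$ from \eqref{eqn:obj} approach the oracle weights $\alpha^*$ of \thref{thm:alphamin} at a polynomial rate $n^{-\lambda}$, and \eqref{eqn:finite-sample} is the resulting bound on $\mathrm{MSE}(\hat{\bar V}^\pi)$. I would take \eqref{eqn:alpha-decay-main} as given (one could in principle obtain such a rate from consistency of the bootstrap estimate $\hat A \to A$ together with local Lipschitz stability of $A \mapsto \argmin_{\alpha:\,\sum_i \alpha_i = 1}\alpha^\top A\alpha$ near a well-conditioned $A$, but that is a separate argument) and concentrate on \eqref{eqn:finite-sample}. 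The organizing idea is to benchmark $\hat{\bar V}^\pi$ against the \emph{oracle blend} $\bar V^\pi \coloneqq \sum_{i=1}^k \alpha_i^*\hat V^\pi_i$, whose mean squared error is by definition exactly $\Delta_c$; then all that remains is to control the extra error from using $\hat\alpha$ in place of $\alpha^*$.

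First I would isolate that extra error. Directly from the definitions, $\hat{\bar V}^\pi - \bar V^\pi = \sum_{i=1}^k (\hat\alpha_i - \alpha_i^*)\hat V^\pi_i$, so the WLOG normalization $|\hat V^\pi_i|\le 1$ and the triangle inequality give the pathwise estimate $|\hat{\bar V}^\pi - \bar V^\pi| \le \sum_{i=1}^k |\hat\alpha_i - \alpha_i^*|$. Squaring, using $(\sum_i x_i)^2 \le k\sum_i x_i^2$, and taking $\E_{D_n}$ reduces the problem to bounding $\sum_{i=1}^k \E_{D_n}[(\hat\alpha_i - \alpha_i^*)^2]$; feeding in \eqref{eqn:alpha-decay-main} at the level of the relevant moment — either read almost surely / in $L^2$, or upgraded from the first moment using that the feasible weight set may be taken bounded (at a correspondingly slower rate) — gives $\E_{D_n}[(\hat{\bar V}^\pi - \bar V^\pi)^2] \le k^2 n^{-2\lambda}$.

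Finally I would recombine with the triangle inequality for the norm $\|X\|_2 \coloneqq (\E_{D_n}[X^2])^{1/2}$. From $\hat{\bar V}^\pi - V^\pi = (\hat{\bar V}^\pi - \bar V^\pi) + (\bar V^\pi - V^\pi)$ and Minkowski,
\[
\sqrt{\mathrm{MSE}(\hat{\bar V}^\pi)} \;\le\; \|\hat{\bar V}^\pi - \bar V^\pi\|_2 + \|\bar V^\pi - V^\pi\|_2 \;\le\; \frac{k}{n^{\lambda}} + \sqrt{\Delta_c},
\]
and squaring yields $\mathrm{MSE}(\hat{\bar V}^\pi) \le k^2 n^{-2\lambda} + 2kn^{-\lambda}\sqrt{\Delta_c} + \Delta_c$. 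The one delicate point is disposing of the cross term $2kn^{-\lambda}\sqrt{\Delta_c}$ so as to land on the clean additive form $k^2 n^{-2\lambda} + \Delta_c$ claimed in \eqref{eqn:finite-sample}: it is of lower order than the other two terms and can be absorbed into them by AM--GM (tolerating absorbable constants), or made to vanish via the first-order optimality of $\alpha^*$ for the population objective of \thref{thm:alphamin} if $\hat\alpha$ is computed on a data split independent of $\{\hat V^\pi_i\}$. I expect exactly this — obtaining \eqref{eqn:finite-sample} with no stray multiplicative constant, together with the related passage from a first-moment to a second-moment bound on the weight error — to be the only real obstacle; everything else is triangle inequalities and the boundedness normalization.
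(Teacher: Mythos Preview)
Your plan is essentially the paper's own argument: decompose $\hat{\bar V}^\pi - V^\pi$ through the oracle blend $\bar V^\pi$, bound the weight-error piece $\Delta_\alpha \coloneqq \E_{D_n}[(\hat{\bar V}^\pi-\bar V^\pi)^2]$ via $|\hat V^\pi_i|\le 1$ and Cauchy--Schwarz to get $\Delta_\alpha \le k\sum_i \E_{D_n}[(\hat\alpha_i-\alpha_i^*)^2]$, and then invoke \eqref{eqn:alpha-decay-main}.

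The two ``delicate points'' you correctly isolate are precisely the places where the paper's proof is informal rather than where you are missing an idea. First, the paper simply writes
\[
\E_{D_n}\big[(\hat{\bar V}^\pi - V^\pi)^2\big] \;\le\; \E_{D_n}\big[(\hat{\bar V}^\pi - \bar V^\pi)^2\big] + \E_{D_n}\big[(\bar V^\pi - V^\pi)^2\big],
\]
i.e.\ it drops the cross term outright (the inequality $(a+b)^2\le a^2+b^2$ is asserted without justification); there is no orthogonality or sample-splitting argument supplied. Second, the paper plugs the first-moment hypothesis $\E_{D_n}[|\hat\alpha_i-\alpha_i^*|]\le n^{-\lambda}$ directly into the second-moment quantity $\E_{D_n}[(\hat\alpha_i-\alpha_i^*)^2]$ to conclude $\Delta_\alpha\le k^2 n^{-2\lambda}$, again with no intermediate step. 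So the ``clean'' additive form $k^2 n^{-2\lambda}+\Delta_c$ in \eqref{eqn:finite-sample} arises from these two shortcuts, not from any additional mechanism you have overlooked. Your Minkowski route and your remark that the hypothesis should really be read at the $L^2$ (or a.s.) level are the honest versions of exactly those steps; if you want to match the paper literally, you may simply adopt its conventions at those two points.
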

The error of OPERA is divided into two terms. First note that  $\Delta_c$ is the approximation error: the difference between the true estimate of the policy performance $V^{\pi}$ and  the best estimand OPERA can yield when using the optimal (unknown) $\alpha^*$. If $V^{\pi}$ can be expressed as a linear combination of the input OPE estimands $\hat \theta_i$, then there is zero approximation error and $\Delta_c = 0$.  The second term in the bound comes from the estimation error due to estimating $\alpha^*$-- this arises from the bootstrapping process used for estimating $A$ in~\eqref{eqn:a_hat}. For this second term we compute an upper bound using a Cauchy-Schwartz inequality. This term decreases as the dataset size $n$ increases. The resulting error depends on the rate at which the estimated $\hat{\alpha}$ converges to the true $\alpha$ as a function of the dataset size. For example, if $\lambda=0.5$, (a $n^{-.5}$ rate), the MSE will converge at a $n^{-1}$ rate in the first term, and if $\lambda=0.25$ (a $n^{-.25}$ rate) the MSE will converge at a $n^{-0.5}$ rate in the first term. We provide the full proof in Appendix~\ref{sec:app:finite-sample}.

We show a full practical implementation of OPERA in Algorithm~\ref{alg:opera}, where we demonstrate how to efficiently construct $\hat A$ and compute $\hat \alpha$.
\subsection{Properties of OPERA}
\label{sec:operaprop}
For $\hat A$ obtained from the bootstrap procedure in \eqref{eqn:a_hat} to be an asymptotically accurate estimate of $A$, (a) a consistent estimator of $\gV$ is required, and (b) the estimators $\hat \gV$ need to be smooth.
We discuss these points in more detail in Appendix \ref{apx:boot}.
In the following, we theoretically establish the properties of OPERA on performance improvement and consistency. We also demonstrate how OPERA allows us to interpret each estimator's quality.
Further, in Section \ref{sec:empirics}, we empirically study the effectiveness of OPERA even when we do not have any consistent base estimators, or $\hat V^{\pi}_i$ is constructed using deep neural networks.

\paragraph{Performance Improvement} It would be ideal that the combined estimator $\hat {\bar V}^{\pi}$  does not perform worse than any of the base estimators $\{\hat V^{\pi}_i\}_{i=1}^n$.
As OPERA optimizes for the MSE, we can directly obtain the following desired result.
\begin{thm}[Performance improvement] If $\hat \alpha = \alpha^*$, $\forall i \in \{1,...,k\}, \quad      \text{MSE}(\hat {\bar V}^{\pi}) \leq \text{MSE}(\hat V^{\pi}_i).$
\thlabel{thm:perfimp}
\end{thm}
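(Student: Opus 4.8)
The plan is to reduce the claim to the variational characterization of $\alpha^*$ in \thref{thm:alphamin} combined with the identity $\text{MSE}(\bar V^{\pi}) = \alpha^\top A \alpha$ established in \eqref{eqn:alphaA}. First I would observe that the hypothesis $\hat\alpha = \alpha^*$ makes the OPERA estimate coincide with $\bar V^{\pi} = \sum_{i=1}^k \alpha_i^* \hat V^{\pi}_i$, so that $\text{MSE}(\hat{\bar V}^{\pi}) = \text{MSE}(\bar V^{\pi})$. Since $\alpha^*$ satisfies the normalization constraint $\sum_{i=1}^k \alpha_i^* = 1$, the derivation in \eqref{eqn:alphamin1}--\eqref{eqn:alphaA} applies verbatim and gives $\text{MSE}(\hat{\bar V}^{\pi}) = (\alpha^*)^\top A\, \alpha^*$.

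Next I would exhibit, for each fixed index $i$, the standard basis vector $e_i \in \mathbb R^{k\times 1}$ as a feasible point of the constrained program in \eqref{eqn:alphamin}: it satisfies $\sum_{j=1}^k (e_i)_j = 1$, which is the only constraint imposed (there is no sign restriction). Evaluating the objective at $\alpha = e_i$ — equivalently, reading off the $(i,i)$ entry of $A = \E\brr{\br{\hat\gV - \gV}\br{\hat\gV - \gV}^\top}$ — yields $e_i^\top A\, e_i = A_{ii} = \E\brr{\br{\hat V^{\pi}_i - V^{\pi}}^2} = \text{MSE}(\hat V^{\pi}_i)$, using the definition \eqref{eqn:mse}.

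Finally, because $\alpha^*$ is by definition a minimizer of $\alpha \mapsto \alpha^\top A\, \alpha$ over the feasible set $\{\alpha : \sum_{j} \alpha_j = 1\}$, and $e_i$ lies in that set, we get $(\alpha^*)^\top A\, \alpha^* \le e_i^\top A\, e_i$. Chaining the three displays gives $\text{MSE}(\hat{\bar V}^{\pi}) \le \text{MSE}(\hat V^{\pi}_i)$ for every $i \in \{1,\dots,k\}$, as claimed. There is essentially no obstacle: the only points requiring a moment of care are that the feasible region in \eqref{eqn:alphamin} is exactly the affine constraint $\sum_j \alpha_j = 1$ (so each $e_i$ is genuinely admissible), and that $A$, being of the form $\E[ww^\top]$, is positive semidefinite so the quadratic program is well posed — though PSD-ness is not even needed for the inequality itself, only for the existence of the minimizer.
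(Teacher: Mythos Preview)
Your proposal is correct and follows essentially the same argument as the paper: both reduce to observing that each standard basis vector $e_i$ is feasible for the constrained program in \eqref{eqn:alphamin}, that the objective evaluated at $e_i$ equals $\text{MSE}(\hat V^{\pi}_i)$, and that optimality of $\alpha^*$ then gives the inequality. The only cosmetic difference is that you phrase things via the quadratic form $\alpha^\top A\,\alpha$ whereas the paper works directly with $\text{MSE}(\bar V^{\pi}(\alpha))$, but these are the same by \eqref{eqn:alphaA}.
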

However, observe that due to bootstrap approximation, $\hat A$ may not be equal to $A$, and thus $\hat \alpha$ may not be equal to $\alpha^*$. Nonetheless, as we will illustrate in Section \ref{sec:empirics}, even in the non-idealized setting OPERA can often achieve MSE better than any of the base estimators $\{V^{\pi}_i\}_{i=1}^n$.
\paragraph{Consistency} Some prior works that deal with multiple OPE estimators assume that there is at least one known consistent estimator \citep{thomas2016data}. Under a similar assumption that $\exists \hat V^{\pi}_i: \hat V^{\pi}_i \overset{p}{\longrightarrow} J(\pi),$  OPERA can be made to fall back to the consistent estimator after a large $n$, such that $\hat {\bar V}^{\pi}$ is also consistent, i.e., $\hat {\bar V}^{\pi} \overset{p}{\longrightarrow} J(\pi)$.
Naturally, as $\bar V^{\pi}$ is a weighted combination of the base estimators $\{\hat V^{\pi}_i\}_{i=1}^k$, if \textit{all} the base estimators provide unreliable estimates, even in the limit of infinite data, then there is not much that can be achieved by weighted combinations of these unreliable estimators.

\begin{wrapfigure}{o}{0.5\textwidth}
\vspace{-3mm}
\input{sections/algo_box} 
\vspace{-9mm}
\end{wrapfigure}

\paragraph{Interpretability}

With a linear weighted formulation for $\bar V^{\pi}$, OPERA allows for the inspection of the assigned weights to which give further insights into the procedure. In Figure \ref{fig:inter} we provide a synthetic example to illustrate the impact of bias and variance of the input estimators on the values of $\alpha$. 

Consider a case where there are two OPE estimators ($\hat V^{\pi}_1$ and $\hat V^{\pi}_2$) and two corresponding weights $\alpha_1$ and $\alpha_2$). Let the true unknown quantity be 
$V^{\pi}=0$. As we can see below, when both estimators have low bias, but one has higher variance, OPERA assigns a higher magnitude of $\alpha_i$ for $V^{\pi}_i$ with a lower variance (Figure\ref{fig:inter}, left). When both estimators have similar variance, and their biases have \textit{opposite} signs with similar magnitude, then $\alpha_2 \approx \alpha_1$ (Figure\ref{fig:inter}, middle left).   

Interestingly, unlike related prior work~\citep{thomas2016data}, our optimization procedure in \eqref{eqn:alphamin} does not require $\alpha_i \geq 0$. Therefore the resulting estimator $\bar V^{\pi}$ may assign negative weights for some of the estimators.  This can be observed for the case when the \textit{sign} of the $\texttt{Bias}(\hat V^{\pi}_1)$ and $\texttt{Bias}(\hat V^{\pi}_1)$ are the same. In such a case, using a positive and a negative weight can help cancel out the biases of the base estimators, as observed in  Figure\ref{fig:inter} (middle right). When one estimator has no bias and the other has no variance, $\alpha$ values are inversely proportional to their contributions towards the MSE (Figure\ref{fig:inter}, right).

\begin{figure}[!h]
    \small
    \centering
    \includegraphics[width=0.2\textwidth]{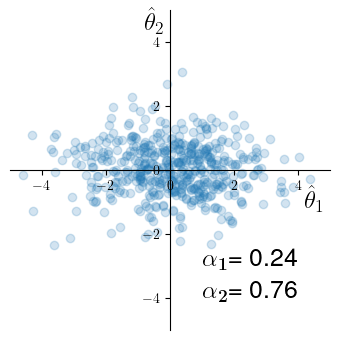}\hfill
    \includegraphics[width=0.2\textwidth]{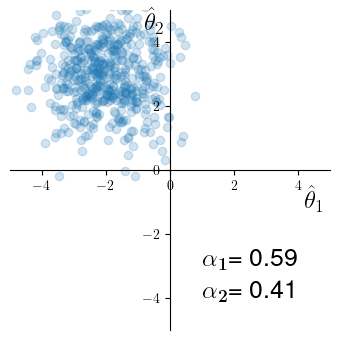}\hfill
    \includegraphics[width=0.2\textwidth]{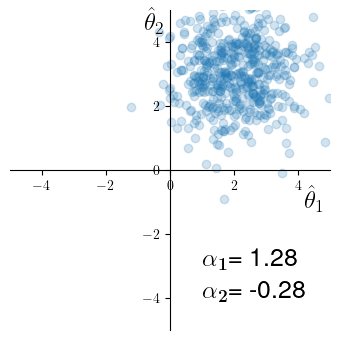}\hfill
    \includegraphics[width=0.2\textwidth]{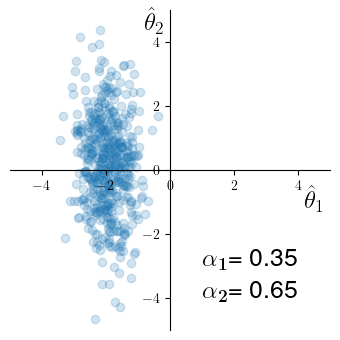}
    \caption{Interpreting weights for different estimators. X-axis shows the value of $\hat V^{\pi}_1$ and Y-axis shows the value of $\hat V^{\pi}_2$.
   }
    \label{fig:inter}
\end{figure}

\section{Experiment}
\label{sec:exp_results}
We now evaluate OPERA on a number of domains commonly used for offline policy evaluation.
Experimental details, when omitted, are presented in the appendix. 

\subsection{Task/Domains}

\textbf{Contextual Bandit}. We validate the performance of OPERA on the synthetic bandit domain with a 10-dimensional feature space proposed in SLOPE~\citep{su2020adaptive}. This domain illustrates how OPERA compares to an estimator-selection algorithm (SLOPE) that assumes a special structure between the estimators.
The true reward is a non-linear neural network function. The reward estimators are parametrized by kernels and the bandwidths are the main hyperparameters. 
As in their paper, we ran 180 configurations of this simulated environment with different parametrization of the environment. Each configuration is replicated 30 times.

\textbf{Sepsis}. This domain is based on a simulator that allows us to model learning treatment options for sepsis patients in ICU~\citep{oberst2019counterfactual}. There are 8 actions and a 
+1/-1/0 reward at the episode end. 
We experiment with two settings: Sepsis-MDP and Sepsis-POMDP, where some crucial states have been masked. We evaluate 7 different policies: an optimal policy and 6 noised suboptimal policies, which we obtain by adding uniform noise to the optimal policy. 

\textbf{Graph}. \citet{voloshin2019empirical} introduced a ToyGraph environment with a horizon length T and an absorbing state $x_{\text{abs}} = 2T$. 
Rewards can be deterministic or stochastic, with +1 for odd states, -1 for even states plus one based on the penultimate state.%
We evaluate the considered methods on a short horizon H=4, varying stochasticity of the reward and transitions, and MDP/POMDP settings. 

\textbf{D4RL-Gym}. D4RL~\citep{fu2020d4rl} is an offline RL standardized benchmark designed and commonly used to evaluate the progress of offline RL algorithms. We use 6 datasets (200k samples each) from three Gym environments: Hopper, HalfCheetah, and Walker2d. We use two datasets from each: the medium-replay dataset, which consists of samples from the experience replay buffer, and the medium dataset, which consists of samples collected by the medium-quality policy. We use conservative Q-learning (CQL)~\citep{kumar2020conservative}, implicit Q-learning (IQL)~\citep{kostrikov2021offline}, and TD3~\citep{fujimoto2018addressing}. We train 6 policies from these three algorithms with 2 different hyperparameters for the neural network. We selected 2 FQE hyperparameters for each task and picked 2 checkpoints (one early, one late) to obtain 4 estimators to build the OPE ensemble.

\begin{table}[b]
\centering
\caption{We report the Mean-Squared Error (MSE) for the Sepsis domain. Each number is averaged across 20 trials. We underscore the estimator that has the lowest MSE in the ensemble.}
\vspace{2mm}
\small
\begin{tabular}{lccccc}
\toprule
Sepsis & N & OPERA & IS & WIS & FQE \\
\midrule
MDP & 200 & \textbf{0.2205} & 0.2753 & 0.2998 & \underline{0.2448} \\
MDP & 1000 & \textbf{0.1705} & \underline{0.1720} & 0.2948 & 0.2995 \\ \midrule
POMDP & 200 & \textbf{0.2750} & \underline{0.2804} & 0.2850 & 0.3931 \\
POMDP & 1000 & \textbf{0.2749} & \underline{0.2799} & 0.3092 & 0.4078 \\
\bottomrule
\end{tabular}
\label{tab:sepsis}
\end{table}

\begin{figure}[t]
    \centering
    \begin{subfigure}[t]{0.32\textwidth}
        \centering
        \includegraphics[width=\textwidth]{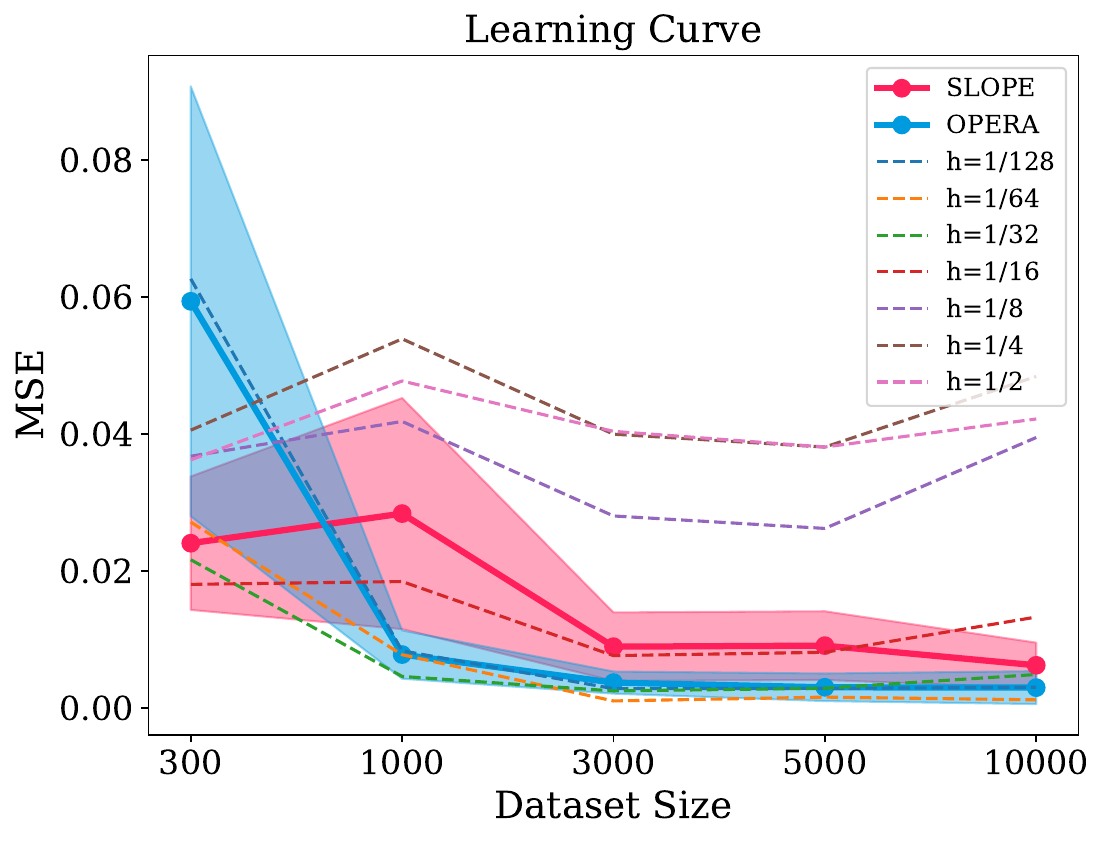}
        \caption{}%
        \label{fig:n-mse}
    \end{subfigure}%
    ~
    \begin{subfigure}[t]{0.32\textwidth}
        \centering
        \includegraphics[width=\textwidth]{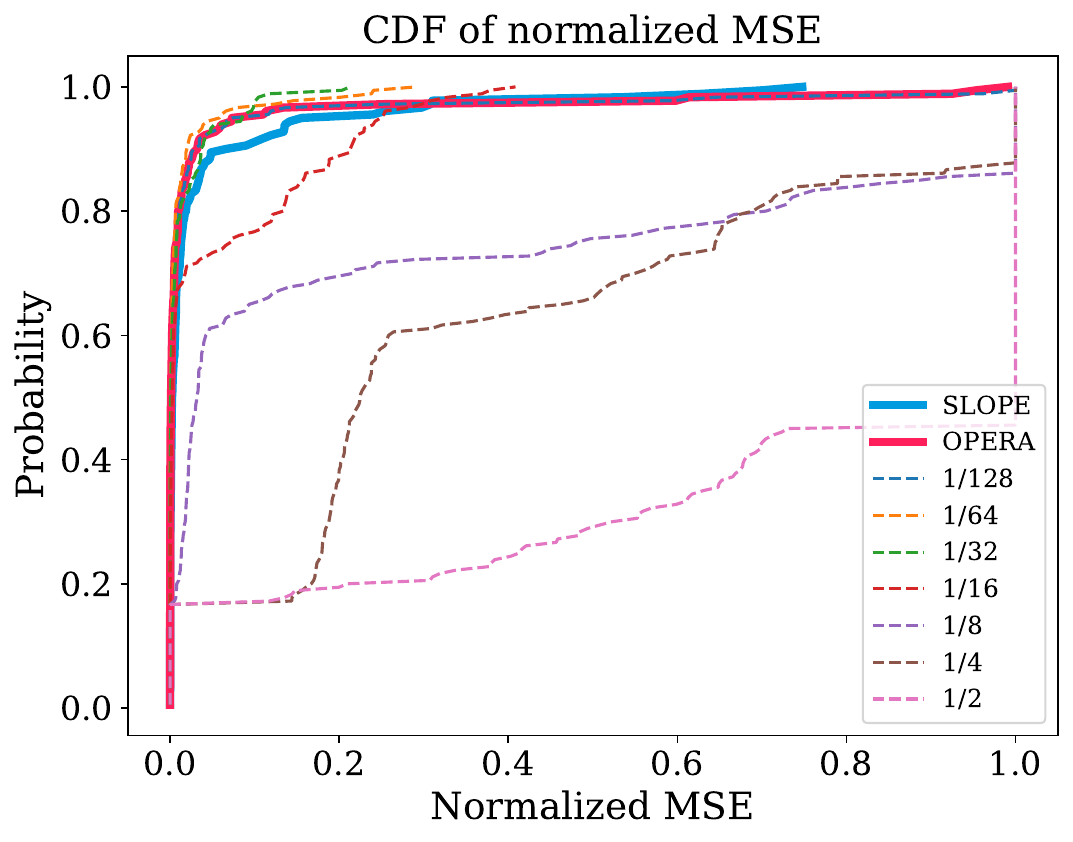}
        \caption{}  %
        \label{fig:cdf-mse}
    \end{subfigure}
    ~
    \begin{subfigure}[t]{0.32\textwidth}
        \centering
        \includegraphics[width=\textwidth]{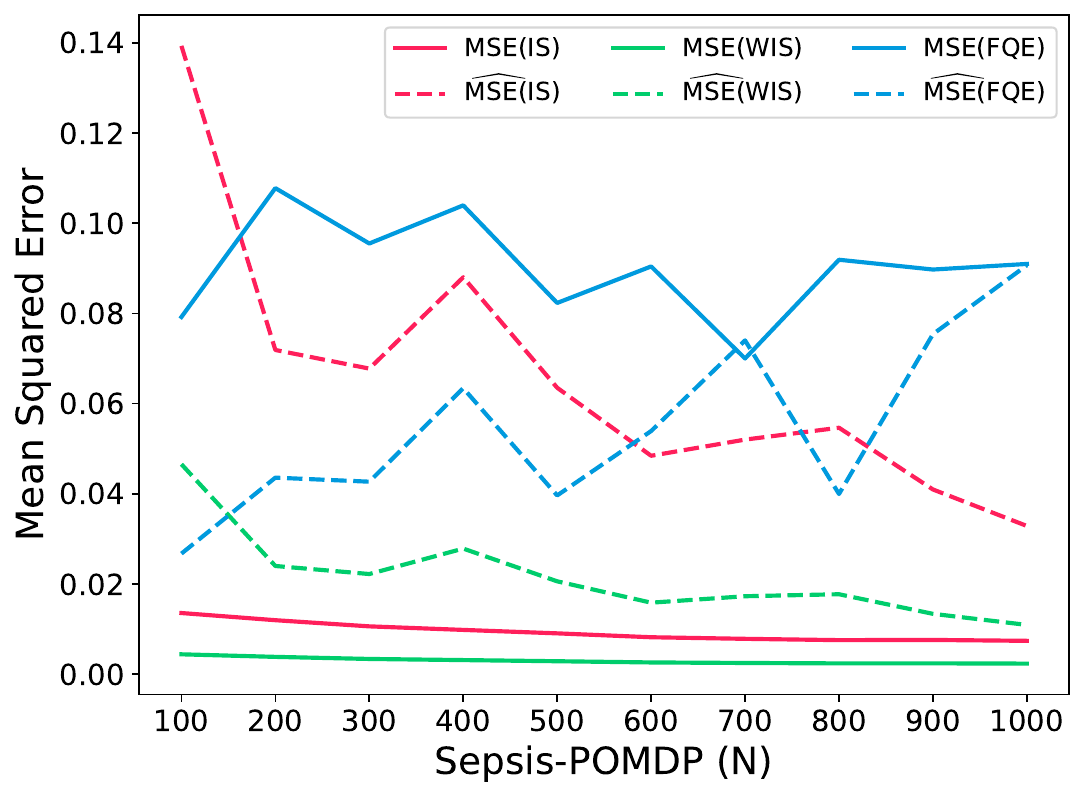}
        \caption{}
    \end{subfigure}
    \caption{Left: Results for contextual bandits. (a) MSE of estimators when the dataset size grows. (b) CDF of normalized MSE across 180 conditions by the worst MSE of that condition. Better methods lie in the top-left quadrant. Right: (c) For an MDP domain (Sepsis), we show that as dataset sizes increase, our bootstrap estimation of MSE approaches true MSE for each OPE estimator.}
    \label{fig:bandit-exp}
\end{figure}

\subsection{Baseline Ensemble OPE Methods}
We compare to using single OPE estimators as well as two new baseline algorithms that combine OPE estimates together.
\textbf{AvgOPE}: We can compute a simple average estimator that just outputs the average of all underlying OPE estimates. 
If an estimator in the ensemble outputs an arbitrarily bad value, this estimator has no implicit mechanism to ignore such an adversarial estimator. 
\textbf{BestOPE}: %
We select the OPE estimator that has the smallest estimated MSE. This estimator can be better than AvgOPE as it can ignore bad estimators. 
In addition, in different domains, we compare to other OPE strategies such as \textbf{BVFT} (Batch Value Function Tournament): making pariwise comparisons between different Q-function estimators with the BVFT-loss~\citep{xie2021batch,zhang2021towards}. \textbf{SLOPE}: an estimator selection method that based on Lepski's method, assuming the estimators forming an order of decreasing variance and increasing bias~\citep{yuan2021sope}. \textbf{DR} (Doubly Robust): a semi-parametric estimator that combines the \textbf{IS} estimator and \textbf{FQE} estimator to have an unbiased low variance estimator~\citep{jiang2016doubly,gottesman2019combining,farajtabar2018more}. All of these methods place explicit constraints on the type of OPE estimator to include.

\begin{table*}[bpth]
\centering
\caption{Root Mean-Squared Error (RMSE) of different OPE algorithms across D4RL tasks.}
\vspace{2mm}
\small
\begin{tabular}{lccccc|cc}
\toprule
& \multicolumn{5}{c}{Multi-OPE Estimator} & \multicolumn{2}{c}{Single OPE Estimator} \\ \midrule
Env/Dataset & OPERA & BestOPE & AvgOPE & BVFT & DR & Dual-DICE & MB  \\
\midrule
\textbf{Hopper} \\ \midrule
medium-replay & \textbf{13.0} & 15.5 & 60.7 & 61.2 & 112.7 & 1565.2 & 298.7  \\
medium & \textbf{8.5} & 12.5 & 120.8 & 16.4 & 16.5 & 368.58 & 269.7  \\ \midrule
\textbf{HalfCheetah} \\ \midrule
medium-replay & \textbf{46.0} & 65.0 & 218.6 & 140.2 & 119.5 & 567.9 & 750.9 \\
medium & \textbf{100.5} & 111.8 & 262.1 & 166.6 & 145.2 & 3450.0 & 589.9 \\ \midrule
\textbf{Walker2d} \\ \midrule
medium-replay & \textbf{138.3} & 167.4 & 187.2 & 221.5 & 155.3 & 2124.3  & 316.8 \\
medium & \textbf{149.0} & 183.8 & 859.4 & 264.1 & 232.1 & 1756.4 & 1269.3 \\
\bottomrule
\end{tabular}
\label{tab:d4rl}
\end{table*}

\subsection{Results}

\paragraph{Contextual Bandit} We report the result in Figure~\ref{fig:bandit-exp}. Figure~\ref{fig:n-mse} shows that as the dataset size grows, the bootstrapping procedure employed by OPERA can quickly estimate the performance each estimator and compute a weighted score that is better than a single estimator. In the ultra-small data regime, OPERA is worse than single-estimator selection style algorithms, mainly because OPERA does not explicitly reject estimators. 
We can add an additional procedure to reject bad estimators and then combine the rest with OPERA, using a rejection algorithm by \cite{lee2022oracle}. 

\paragraph{Sepsis}
We report the results in Table~\ref{tab:sepsis}. In this domain, OPERA is able to produce an estimate, on average, across many policies with different degrees of optimality, that matches and exceeds the best estimator in the ensemble. Even though in three out of four tasks, OPERA MSE is close to the MSE of the best estimator in the ensemble, in the MDP (N=200) setup, OPERA is able to get a significantly lower MSE than any of the estimators in the ensemble, suggesting a future direction of carefully choosing a set of weak estimators to put in the ensemble to obtain a strong estimator.

\paragraph{Graph} We report the graph domain result in Appendix~\ref{sec:app:graph} and in Table~\ref{tab:graph}. We find a similar result to the Sepsis domain. OPERA is able to outperform AvgOPE and BestOPE in different setups. 

\paragraph{D4RL}
We report the results in Table~\ref{tab:d4rl}. We choose this domain because, in continuous control tasks, the horizon is often very long. Many OPE estimators that rely on short-horizon or discrete actions will not be able to extend to this domain. A popular OPE choice is FQE with function approximation, but it is difficult to determine hyperparameters like early stopping, network architecture, and learning rate. We can see that even though FQE used in D4RL is not a consistent estimator and does not satisfy OPERA's theoretical assumption, we are still able to combine the estimations to reach an aggregate estimate with lower MSE.

\section{Discussion: Different MSE Estimation Strategies}
\label{sec:empirics}

\subsection{Estimating MSE with MAGIC}
\label{app:sec:magic}

Part of our algorithm implicitly involves estimating the MSE of each OPE estimator. In our algorithm we do this using bootstrapping but other alternatives are possible.  
For example, prior work by~\citep{thomas2016data} provided a way to estimate the bias and variance of an OPE estimator are computed through per-trajectory OPE scores and used this as part of their MAGIC estimator. However, this method cannot estimate the MSE of self-normalizing estimators (such as WIS) or minimax-style estimators (such as any estimator in the DICE family~\citep{yang2020off}). We denote this estimator as $\widehat{\text{MSE}}_{\mathrm{MAGIC}}(V^{\pi})$ and now explore how our approach of using boostrapping compares to this method in an illustrative setting.

In particular, we consider estimating the MSE of the FQE and IS estimands on the Sepsis-POMDP and Sepsis-MDP domains. MAGIC estimates the bias of an OPE as the distance between the OPE value and the closest upper or lower bound of a weighted importance sampling (WIS) policy estimate. We use a  percentile bootstrap to construct a 50\% confidence interval  CI around WIS. %

Our bootstrap $\widehat{\text{MSE}}(V^{\pi})$ procedure is able to provide a consistently more accurate estimate of the true MSE of the FQE estimate compared to $\widehat{\text{MSE}}_{\mathrm{MAGIC}}(V^{\pi})$  and a comparable or better one for the IS estimate (see Table~\ref{tab:magic}). We suspect that this is due to MAGIC's unique way of computing bias. Specifically, MAGIC computes bias by comparing two estimates (in this case, FQE and the uppper/lower bounds on WIS) which may significantly misestimate the bias in some situations. %

\begin{table*}[t]
\small
\caption{We compare two styles of MSE estimations and how well they can estimate the true MSE of each estimator. We report averaged results over 10 trials, with N=200.}
\label{tab:magic}
\centering
\begin{tabular}{@{}rccc|ccc@{}}
\toprule
& \multicolumn{3}{c|}{Sepsis-POMDP} & \multicolumn{3}{c}{Sepsis-MDP} \\ 
    & $\text{MSE}(V^{\pi})$     & $\widehat{\text{MSE}}_{\mathrm{MAGIC}}(V^{\pi})$  & $\widehat{\text{MSE}}(V^{\pi})$ & $\text{MSE}(V^{\pi})$    & $\widehat{\text{MSE}}_{\mathrm{MAGIC}}(V^{\pi})$ & $\widehat {\text{MSE}}(V^{\pi})$ \\ \midrule
IS  & 0.0161  & 0.0281     & \textbf{0.0088}    & 0.3445 & \textbf{0.0485}    & 0.0056    \\
FQE & 0.0979  & 0.4953     & \textbf{0.0163}    & 0.0077 & 0.0771    & \textbf{0.0011}    \\ \bottomrule
\end{tabular}
\end{table*}

\begin{table*}[t]
\small
\centering
\caption{We report the Mean-Squared Error (MSE) for the Sepsis domain. We additionally present two variants of OPERA where we experimented with different MSE estimation strategies.}
\vspace{2mm}
\begin{tabular}{@{}lccccccc@{}}
\toprule
Sepsis & N    & OPERA           & OPERA-IS & OPERA-MAGIC & IS                 & WIS    & FQE                \\ \midrule
MDP    & 200  & 0.2205 & \textbf{0.2181}   & 0.2657      & 0.2753             & 0.2998 & \underline{0.2448} \\
MDP    & 1000 & \textbf{0.1705} & 0.1779   & 0.1848      & \underline{0.1720} & 0.2948 & 0.2995             \\ \midrule
POMDP  & 200  & \textbf{0.2750} & 0.2768   & 0.2827      & \underline{0.2804} & 0.2850 & 0.3931             \\
POMDP  & 1000 & 0.2749 & \textbf{0.2720}   & 0.2802      & \underline{0.2799} & 0.3092 & 0.4078             \\ \bottomrule
\end{tabular}
\label{tab:opera-variants}
\end{table*}

\subsection{Variants of OPERA with Different Strategies}

We now explore two alternative strategies to estimate the MSE of each estimator. The first strategy is, instead of using the estimator's own score as the centering variable $\hat \gV$, we use a consistent and unbiased estimator's score as $\hat \gV$. We call this OPERA-IS. Another strategy is to use the idea from ~\citep{thomas2016data}'s MAGIC algorithm, where the bias estimate of each estimator compares the estimand to the upper or lower confidence bound of a weighted importance sampling estimator, as above. We call this OPERA-MAGIC. 
These are two new variants of our OPERA algorithm that will may lead to learning different $\hat{\alpha}$ weights and producing different linearly stacked estimates. We use these two new methods, and compute the true MSE of the resulting stacked estimate, compared to OPERA and other baseline estimates. We use the Sepsis domains to illustrate the results and use as input IS, WIS and FQE OPE estimates.

The true MSE of the resulting estimates are  presented in Table~\ref{tab:opera-variants}.
While using an unbiased consistent estimator as the centering variable can help further improve OPERA's estimate, sometimes it also hurts the performance (MDP N=1000 setting). OPERA-MAGIC however almost always performs worse than the best estimator in the ensemble. This suggests that when combining OPE scores this bound on the bias, which will provide a distorted estimate of the estimator bias especially in low data regimes, can lead to learning less effective weightings of the input OPE estimands. OPERA remains a solid option across all settings presented in the table.

\section{Conclusion}

We propose a novel offline policy evaluation algorithm, OPERA, that leverages ideas from stack generalization to combine many OPE estimators to produce a single estimate that achieves a lower MSE.  Though such stacked generalization / meta-learning has been frequently used to create better estimates from ensembles of input methods in supervised learning, to our knowledge this is the first time it has been explored in offline reinforcement learning. One challenge is that unlike in supervised learning, we do not have ground truth labels for offline policy learning. OPERA uses bootstrapping to estimate the MSE for each OPE estimator in order to find a set of weights to blend each OPE's estimate. We provide a finite sample analysis of OPERA's performance under mild assumptions, and demonstrate that OPERA provides notably more accurate offline policy evaluation estimates compared to prior methods in benchmark bandit tasks and offline RL tasks, including a Sepsis simulator and the D4RL settings. There are many interesting directions for future work, including using more complicated meta-aggregators.

\begin{ack}
The research reported in this paper was supported in part by a Stanford HAI Hoffman-Yee grant and an NSF \#2112926 grant.
We thank Sanath Kumar Krishnamurthy, Omer Gottesman, Yi Su, and Adith Swaminathan for the discussions. 
\end{ack}

\bibliographystyle{apalike}
\bibliography{example_paper}

\appendix
\newpage

\section{Appendix}

\subsection{OPERA Diagram}

We describe the OPERA framework as a two-stage process in Figure~\ref{fig:diagram}. The first stage involves using black-box statistical methods (such as Bootstrap) to estimate the quality of each OPE estimator. The information is then used to estimate a weight $\alpha$ through a convex optimization objective in Eq~\ref{eqn:obj}. At stage two, we use the learned $\alpha$ to combine each estimator's score to output the OPERA score.

\tikzset{every picture/.style={line width=0.75pt}} %

\begin{figure*}[h]
\centering
\resizebox{0.8\textwidth}{!}{
\begin{tikzpicture}[x=0.75pt,y=0.75pt,yscale=-1,xscale=1]

\draw    (27,101) -- (67,101) ;
\draw [shift={(70,101)}, rotate = 180] [fill={rgb, 255:red, 0; green, 0; blue, 0 }  ][line width=0.08]  [draw opacity=0] (8.93,-4.29) -- (0,0) -- (8.93,4.29) -- cycle    ;
\draw    (95.6,101.2) -- (124.91,49.81) ;
\draw [shift={(126.4,47.2)}, rotate = 119.7] [fill={rgb, 255:red, 0; green, 0; blue, 0 }  ][line width=0.08]  [draw opacity=0] (8.93,-4.29) -- (0,0) -- (8.93,4.29) -- cycle    ;
\draw    (95.6,101.2) -- (125.06,82.79) ;
\draw [shift={(127.6,81.2)}, rotate = 147.99] [fill={rgb, 255:red, 0; green, 0; blue, 0 }  ][line width=0.08]  [draw opacity=0] (8.93,-4.29) -- (0,0) -- (8.93,4.29) -- cycle    ;
\draw    (95.6,101.2) -- (126.02,135.75) ;
\draw [shift={(128,138)}, rotate = 228.64] [fill={rgb, 255:red, 0; green, 0; blue, 0 }  ][line width=0.08]  [draw opacity=0] (8.93,-4.29) -- (0,0) -- (8.93,4.29) -- cycle    ;
\draw  [dash pattern={on 0.84pt off 2.51pt}]  (148,89.6) -- (148.4,122.8) ;
\draw    (176,47.33) -- (201.29,90.61) ;
\draw [shift={(202.8,93.2)}, rotate = 239.7] [fill={rgb, 255:red, 0; green, 0; blue, 0 }  ][line width=0.08]  [draw opacity=0] (8.93,-4.29) -- (0,0) -- (8.93,4.29) -- cycle    ;
\draw    (177.33,82.33) -- (201.45,97.22) ;
\draw [shift={(204,98.8)}, rotate = 211.7] [fill={rgb, 255:red, 0; green, 0; blue, 0 }  ][line width=0.08]  [draw opacity=0] (8.93,-4.29) -- (0,0) -- (8.93,4.29) -- cycle    ;
\draw    (178,137) -- (202.18,105.19) ;
\draw [shift={(204,102.8)}, rotate = 127.24] [fill={rgb, 255:red, 0; green, 0; blue, 0 }  ][line width=0.08]  [draw opacity=0] (8.93,-4.29) -- (0,0) -- (8.93,4.29) -- cycle    ;
\draw    (226.6,101.8) -- (266.6,101.8) ;
\draw [shift={(269.6,101.8)}, rotate = 180] [fill={rgb, 255:red, 0; green, 0; blue, 0 }  ][line width=0.08]  [draw opacity=0] (8.93,-4.29) -- (0,0) -- (8.93,4.29) -- cycle    ;
\draw  [color={rgb, 255:red, 0; green, 154; blue, 222 }  ,draw opacity=1 ][dash pattern={on 4.5pt off 4.5pt}] (56,30) -- (184,30) -- (184,155.2) -- (56,155.2) -- cycle ;
\draw    (351.4,99.6) -- (380.71,48.21) ;
\draw [shift={(382.2,45.6)}, rotate = 119.7] [fill={rgb, 255:red, 0; green, 0; blue, 0 }  ][line width=0.08]  [draw opacity=0] (8.93,-4.29) -- (0,0) -- (8.93,4.29) -- cycle    ;
\draw    (351.4,99.6) -- (380.86,81.19) ;
\draw [shift={(383.4,79.6)}, rotate = 147.99] [fill={rgb, 255:red, 0; green, 0; blue, 0 }  ][line width=0.08]  [draw opacity=0] (8.93,-4.29) -- (0,0) -- (8.93,4.29) -- cycle    ;
\draw    (351.4,99.6) -- (381.82,134.15) ;
\draw [shift={(383.8,136.4)}, rotate = 228.64] [fill={rgb, 255:red, 0; green, 0; blue, 0 }  ][line width=0.08]  [draw opacity=0] (8.93,-4.29) -- (0,0) -- (8.93,4.29) -- cycle    ;
\draw  [dash pattern={on 0.84pt off 2.51pt}]  (403.8,88) -- (404.2,121.2) ;
\draw    (431.67,45.67) -- (457.08,89.01) ;
\draw [shift={(458.6,91.6)}, rotate = 239.61] [fill={rgb, 255:red, 0; green, 0; blue, 0 }  ][line width=0.08]  [draw opacity=0] (8.93,-4.29) -- (0,0) -- (8.93,4.29) -- cycle    ;
\draw    (434,83) -- (457.17,95.75) ;
\draw [shift={(459.8,97.2)}, rotate = 208.83] [fill={rgb, 255:red, 0; green, 0; blue, 0 }  ][line width=0.08]  [draw opacity=0] (8.93,-4.29) -- (0,0) -- (8.93,4.29) -- cycle    ;
\draw    (433.33,136.33) -- (457.99,103.6) ;
\draw [shift={(459.8,101.2)}, rotate = 126.99] [fill={rgb, 255:red, 0; green, 0; blue, 0 }  ][line width=0.08]  [draw opacity=0] (8.93,-4.29) -- (0,0) -- (8.93,4.29) -- cycle    ;
\draw  [color={rgb, 255:red, 255; green, 31; blue, 91 }  ,draw opacity=1 ][dash pattern={on 4.5pt off 4.5pt}] (207.3,69) -- (288.9,69) -- (288.9,134.6) -- (207.3,134.6) -- cycle ;
\draw [color={rgb, 255:red, 155; green, 155; blue, 155 }  ,draw opacity=1 ]   (312,8) -- (312,176.4) ;

\draw (9,92) node [anchor=north west][inner sep=0.75pt]    {$D$};
\draw (71,90.4) node [anchor=north west][inner sep=0.75pt]    {$D^{*}$};
\draw (129,38.4) node [anchor=north west][inner sep=0.75pt]    {$OPE_{1}$};
\draw (129,69.2) node [anchor=north west][inner sep=0.75pt]    {$OPE_{2}$};
\draw (130.6,127.6) node [anchor=north west][inner sep=0.75pt]    {$OPE_{k}$};
\draw (207.8,90.8) node [anchor=north west][inner sep=0.75pt]    {$\hat{A}$};
\draw (271.2,92.8) node [anchor=north west][inner sep=0.75pt]    {$\hat{\alpha }$};
\draw (86.2,9.2) node [anchor=north west][inner sep=0.75pt]  [color={rgb, 255:red, 0; green, 154; blue, 222 }  ,opacity=1 ] [align=left] {Bootstrap};
\draw (330.8,90.4) node [anchor=north west][inner sep=0.75pt]    {$D$};
\draw (384.8,36.8) node [anchor=north west][inner sep=0.75pt]    {$OPE_{1}$};
\draw (384.8,67.6) node [anchor=north west][inner sep=0.75pt]    {$OPE_{2}$};
\draw (386.4,126) node [anchor=north west][inner sep=0.75pt]    {$OPE_{k}$};
\draw (440.8,54.4) node [anchor=north west][inner sep=0.75pt]  [font=\tiny]  {$\hat{\alpha }_{1}$};
\draw (441.4,79.2) node [anchor=north west][inner sep=0.75pt]  [font=\tiny]  {$\hat{\alpha }_{2}$};
\draw (441.2,101.8) node [anchor=north west][inner sep=0.75pt]  [font=\tiny]  {$\hat{\alpha }_{k}$};
\draw (460,87.2) node [anchor=north west][inner sep=0.75pt]   [align=left] {OPERA};
\draw (203.4,41.6) node [anchor=north west][inner sep=0.75pt]  [color={rgb, 255:red, 255; green, 31; blue, 91 }  ,opacity=1 ] [align=left] {Optimization};
\draw (79,165.4) node [anchor=north west][inner sep=0.75pt]   [align=left] {Stage 1: Estimating $\displaystyle \alpha $};
\draw (328.2,165.4) node [anchor=north west][inner sep=0.75pt]   [align=left] {Stage 2: Aggregating OPEs};

\end{tikzpicture}
}

\caption{OPERA framework as a two-stage process.}
\label{fig:diagram}

\end{figure*}
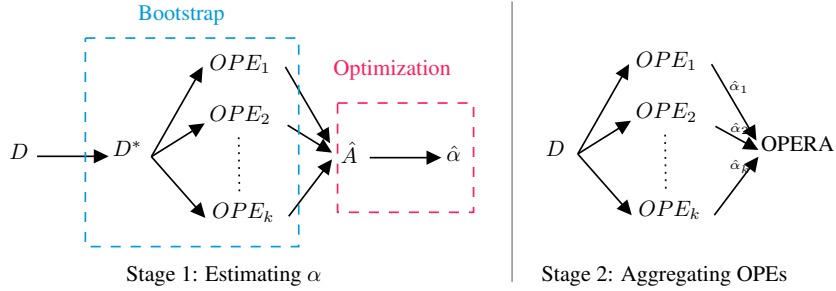

\subsection{Notation Table}

We provide a list of important notations used in this paper.

\begin{table}[h]
\begin{tabular}{@{}ll@{}}
\toprule
Notation & Description \\ \midrule
$n$ & We use $n$, $n_1$ to refer to the number of trajectories in the dataset. \\
$k$ & We use $k$ to refer to the number of estimators in the ensemble. \\
\midrule
 $D_n$        &    $D_n = \{\tau_i\}_{i=1}^n = \{s_i, a_i, s_i', r_i\}_{i=1}^n$ be the trajectories sampled from $\pi$ on an MDP.         \\  
 $D_{n_1}^*$ & A bootstrapped sample from $D_n$ with $n_1$ data points, $n_1 < n$. \\ \midrule 
  $A$ & The cross-estimator bias variance matrix of $k$ estimators. $A \in \mathbb R^{k \times k}$  \\
  $\hat A$ & The cross-estimator bias variance matrix of $k$ estimators using $D_{n_1}^*$. $\hat A \in \mathbb R^{k \times k}$  \\
  \midrule
  $\alpha$ & Coefficient for $k$ estimators. $\alpha \in \mathbb R^{k \times 1}$ \\
  $\alpha^*$ & Coefficient for $k$ estimators for solving $\alpha^*  \in  \argmin_{\alpha \in \mathbb R^{k \times 1} } \alpha^\top A \alpha$.  \\ 
  $\hat \alpha$ & Coefficient for $k$ estimators for solving $\hat\alpha  \in  \argmin_{\alpha \in \mathbb R^{k \times 1} } \alpha^\top \hat A \alpha$.  \\\midrule
 $V^{\pi}$         &  True performance of the policy $\pi$, $V^{\pi} = J(\pi)$.           \\ 
 $\hat V^{\pi}_i$   &  \begin{tabular}[c]{@{}l@{}} Let $i$-th OPE estimate of $\pi$'s performance be $\hat V_i^{\pi}(D_n) = \hat V_i(\pi, D_n)$.\\We use $\hat V^{\pi}_i=\hat V_i^{\pi}(D_n)$ where $D_n$ is the full offline dataset. \end{tabular}      \\  
 $\bar V^{\pi}(\alpha)$ & Weighted average of $k$ estimators with given $\alpha$: $\bar V^{\pi}(\alpha) \coloneqq \sum_{i=1}^k \alpha_i \hat V^{\pi}_i \in \mathbb R$ \\
 $\hat{\bar V}^{\pi}$    &  OPERA's estimated performance of $\pi$. $\hat{\bar V}^{\pi} = \bar V^{\pi}(\hat\alpha)$.          \\ 
 $\bar V^{\pi}$ & A shorthand notation for $\bar V^{\pi} = \bar V^{\pi}(\alpha^*)$. \\ \midrule 
 $\text{MSE}(\hat V^{\pi}_i)$ &  \begin{tabular}[c]{@{}l@{}} The mean-squared error of the $i$-th OPE estimator.\\ Based on how we constructed $A$, we have $\text{MSE}(\hat V^{\pi}_i) = A_{i, i}$. \end{tabular} \\
$\text{MSE}(\hat{\bar V}^{\pi})$ & \begin{tabular}[c]{@{}l@{}} The mean-squared error of $\hat{\bar V}^{\pi}$ (OPERA score).\\ $\text{MSE}(\hat{\bar V}^{\pi}) \leq \Delta_\alpha + \Delta_c$. \end{tabular}   \\
$\widehat {\text{MSE}}(\hat V^{\pi}_i)$    &    \begin{tabular}[c]{@{}l@{}} The estimated mean-squared error of the $i$-th estimator using bootstrapped dataset $D_{n_1}^*$.\\ Based on how we constructed $\hat A$, we have $\widehat {\text{MSE}}(\hat V^{\pi}_i) = \hat A_{i, i}$. \end{tabular}          \\ \midrule
  $\Delta_c$         &   \begin{tabular}[c]{@{}l@{}} The change in mean-squared error of OPERA, $\hat{\bar V}^{\pi}$, from using dataset $D_n$. \\ $\Delta_c \coloneqq \mathbb E_{D_n} \brr{\br{ \bar V^{\pi} - V^{\pi}}^2}$. \end{tabular}          \\ 
 $\Delta_\alpha$         & \begin{tabular}[c]{@{}l@{}} The change in mean-squared error of OPERA, $\hat{\bar V}^{\pi}$, from using sub-optimal $\alpha$. \\ $\Delta_\alpha \coloneqq \mathbb E_{D_n} \brr{(\hat{\bar V}^{\pi} - \bar V^{\pi})^2}$. \end{tabular}              \\ 
\bottomrule
\end{tabular}
\end{table}

\subsection{Bootstrap Convergence}
\label{apx:boot}

In this section, we provide a high-level discussion of the bootstrap procedure and its asymptotic validity. We refer the readers to the works by \citep{cao1993bootstrapping,hall1990using} for a more fine-grained analysis and convergence rates when estimating MSE using statistical bootstrap. Individual treatment of bias \citep{efron1990more,efron1994introduction,hong,shi,Mikusheva} and variance \citep{yenchivar,gamero1998bootstrapping,shao1990bootstrap,ghosh1984note, li1999bootstrap}  can also be found.

 In the following, we will discuss the consistency of $\hat A$ estimated using bootstrap,
\begin{align}
    \hat A_{i.j} - A_{i,j} \overset{a.s.}{\longrightarrow} 0.
\end{align}
Towards this goal, we will consider the following conditions imposed on the set of the base estimators $\{\hat V^{\pi}_i\}_{i=1}^k$,
\begin{itemize}
    \item $\forall i, \quad \hat V^{\pi}_i$ is uniformly bounded.
    \item $\forall i, \quad \hat V^{\pi}_i \overset{a.s.}{\longrightarrow} c_i$.
    \item $\forall i, \quad  \hat V^{\pi}_i$ is smooth with respect to data distribution.
    \item $\exists \hat V^{\pi}_k: \hat V^{\pi}_k \overset{a.s.}{\longrightarrow} c_k = V^{\pi}$.
\end{itemize}
Recall from \eqref{eqn:alphamin},
 \begin{align}
        A_{i,j} &= 
       \mathbb E \brr{\br{\hat V^{\pi}_i - V^{\pi}} \br{\hat V^{\pi}_j- V^{\pi}} }
       \\
       &= 
       \mathbb E \Big[ \br{\hat V^{\pi}_i - \E[\hat V^{\pi}_i] + \E[\hat V^{\pi}_i]- V^{\pi}} \\&\qquad \quad \br{\hat V^{\pi}_j- \E[\hat V^{\pi}_j] + \E[\hat V^{\pi}_j]-  V^{\pi}} \Big]
       \\
       &= \mathbb E \brr{\br{\hat V^{\pi}_i - \E[\hat V^{\pi}_i] } \br{\hat V^{\pi}_j- \E[\hat V^{\pi}_j]}} \\ & \qquad+\mathbb E \brr{ \br{\E[\hat V^{\pi}_i]- V^{\pi}} \br{\E[\hat V^{\pi}_j]-  V^{\pi}} } .
    \end{align}
Let $X_n \coloneqq \br{\hat V^{\pi}_i - \E[\hat V^{\pi}_i] } $ and $Y_n \coloneqq \br{\hat V^{\pi}_j- \E[\hat V^{\pi}_j]}$. 
As $\hat V^{\pi}_i \overset{a.s.}{\longrightarrow} c_i$ and $\hat V^{\pi}_i$ is uniformly bounded, using \citep[Lemma 2]{thomas2016data}, we have $\E[\hat V^{\pi}_i] \overset{a.s.}{\longrightarrow} c_i$.
Similarly,  we have $\E[\hat V^{\pi}_j] \overset{a.s.}{\longrightarrow} c_j$ as $\hat V^{\pi}_j \overset{a.s.}{\longrightarrow} c_j$.
Then using continuous mapping theorem,
\begin{align}
     X_n Y_n \overset{a.s.}{\longrightarrow} (c_i - c_i)(c_j - c_j)=0. 
\end{align}
Now using \citep[Lemma 2]{thomas2016data},
\begin{align}
   \mathbb E \brr{\br{\hat V^{\pi}_i - \E[\hat V^{\pi}_i] } \br{\hat V^{\pi}_j- \E[\hat V^{\pi}_j]}}  =  \E[X_n Y_n] \overset{a.s.}{\longrightarrow} 0. \label{eqn:lim1}
\end{align}
    Similarly, 
    \begin{align}
        \br{\E[\hat V^{\pi}_i]- V^{\pi}} \br{\E[\hat V^{\pi}_j]- V^{\pi}} \overset{a.s.}{\longrightarrow} (c_i - V^{\pi})(c_j - V^{\pi}) \label{eqn:lim2}
    \end{align}
Therefore, using \eqref{eqn:lim1} and \eqref{eqn:lim2},
\begin{align}
     A_{i,j} \overset{a.s.}{\longrightarrow} 0 + (c_i - V^{\pi})(c_j - V^{\pi}). \label{eqn:temp004}
\end{align}
Now we consider the asymptotic property of the bootstrap estimate $\hat A$ of $A$. 
 \begin{align}
        \hat A_{i,j} &= 
       \mathbb E_{D_{n_1}^*| D_n} \brr{\br{\hat V^{\pi}_i(D_{n_1}^*) - \hat V^{\pi}_k} \br{\hat V^{\pi}_j(D_{n_1}^*) - \hat V^{\pi}_k} } \label{eqn:temp001}
    \end{align}
where $\hat V^{\pi}_k$ is known to be a consistent estimator, i.e., $\hat V^{\pi}_k \overset{a.s.}{\longrightarrow} V^{\pi}$. Here, $\hat V^{\pi}_k$ could be the WIS or IS or doubly-robust estimators that are known to provide consistent estimates of $V^{\pi} = J(\pi)$.
For brevity, we drop the conditional notation on the subscript, and write \eqref{eqn:temp001} as,
 \begin{align}
        \hat A_{i,j} &= 
       \mathbb E_{D_{n_1}^*} \brr{\br{\hat V^{\pi}_i(D_{n_1}^*) - \hat V^{\pi}_k} \br{\hat V^{\pi}_j(D_{n_1}^*) - \hat V^{\pi}_k} } \label{eqn:temp006}
    \end{align}
Simplifying \eqref{eqn:temp006}, 
 \begin{align}
        \hat A_{i,j} &= 
       \mathbb E_{D_{n_1}^*} \Bigg[\Big(\hat V^{\pi}_i(D_{n_1}^*) - \mathbb E_{D_{n_1}^*} \brr{\hat V^{\pi}_i(D_{n_1}^*) } + \mathbb E_{D_{n_1}^*} \brr{\hat V^{\pi}_i(D_{n_1}^*) } - \hat V^{\pi}_k \Big)
       \\
       &\quad\quad\quad\quad\quad \Big(\hat V^{\pi}_j(D_{n_1}^*) - \mathbb E_{D_{n_1}^*} \brr{\hat V^{\pi}_j(D_{n_1}^*) } + \mathbb E_{D_{n_1}^*} \brr{\hat V^{\pi}_j(D_{n_1}^*) } - \hat V^{\pi}_k \Big) \Bigg]
       \\
       &= 
       \mathbb E_{D_{n_1}^*} \Bigg[\br{\hat V^{\pi}_i(D_{n_1}^*) - \mathbb E_{D_{n_1}^*} \brr{\hat V^{\pi}_i(D_{n_1}^*) } } 
       \\
       &\qquad\qquad\quad \br{\hat V^{\pi}_j(D_{n_1}^*) - \mathbb E_{D_{n_1}^*} \brr{\hat V^{\pi}_j(D_{n_1}^*) }} \Bigg]
       \\
        &\quad\quad\quad + \mathbb E_{D_{n_1}^*} \brr{\hat V^{\pi}_i(D_{n_1}^*)  - \hat V^{\pi}_k } \mathbb E_{D_{n_1}^*} \brr{\hat V^{\pi}_j(D_{n_1}^*)  - \hat V^{\pi}_k } \label{eqn:temp003}
\end{align}
    Let $X_{n_1} \coloneqq \br{ \hat V^{\pi}_i(D_{n_1}^*) - \mathbb E_{D_{n_1}^*} \brr{\hat V^{\pi}_i(D_{n_1}^*) } }$ and $Y_{n_1} \coloneqq \br{\hat V^{\pi}_j(D_{n_1}^*) - \mathbb E_{D_{n_1}^*} \brr{\hat V^{\pi}_j(D_{n_1}^*) }}$.
    As the empirical distribution $D_{n_1}^*$ converges to the population distribution, i.e., $D_{n} \overset{a.s.}{\longrightarrow} D$, the resampled distribution $D_{n_1}^*$ from $D_{n}$ also converges to the population distribution, i.e., $D_{n_1}^* \overset{a.s.}{\longrightarrow} D$. 
    Therefore, when the estimator $\hat V^{\pi}_i(D_{n_1}^*)$ is smooth, using the continuous mapping theorem,
\begin{align}
  \forall i, \hspace{20pt}  \lim_{n_1 \rightarrow \infty} \hat V^{\pi}_i(D_{n_1}^*) = \hat V^{\pi}_i \left( \lim_{n_1 \rightarrow \infty} D_{n_1}^* \right) = \hat V^{\pi}_i(D) = c_i.
\end{align}
Therefore, similar to before,
\begin{align}
    X_{n_1}Y_{n_1} \overset{a.s.}{\longrightarrow} (c_i - c_i)(c_j - c_j) = 0,
\end{align}
and subsequently, 
\begin{align}
    \mathbb E_{D_{n_1}^*}\brr{X_{n_1}Y_{n_1}} \overset{a.s.}{\longrightarrow} 0. \label{eqn:lim3}
\end{align}
Further, as $\hat V^{\pi}_k \overset{a.s.}{\longrightarrow} V^{\pi}$,
\begin{align}
    \hat V^{\pi}_i(D_{n_1}^*)  - \hat V^{\pi}_k \overset{a.s.}{\longrightarrow} c_i - V^{\pi}.
\end{align}
Therefore, 
\begin{align}
    &\mathbb E_{D_{n_1}^*} \brr{\hat V^{\pi}_i(D_{n_1}^*)  - \hat V^{\pi}_k } \mathbb E_{D_{n_1}^*} \brr{\hat V^{\pi}_j(D_{n_1}^*)  - \hat V^{\pi}_k } \\  &\qquad \overset{a.s.}{\longrightarrow} (c_i - V^{\pi})(c_j - V^{\pi}). \label{eqn:lim4}
\end{align}
Using \eqref{eqn:lim3} and \eqref{eqn:lim4} in \eqref{eqn:temp003},
\begin{align}
    \hat A_{i,j} \overset{a.s.}{\longrightarrow} 0 +  (c_i - V^{\pi})(c_j - V^{\pi}). \label{eqn:temp005}
\end{align}
Finally, combining \eqref{eqn:temp004} and \eqref{eqn:temp005},
\begin{align}
    \hat A_{i.j} - A_{i,j} \overset{a.s.}{\longrightarrow} 0.
\end{align}
which gives the desired result.
It is worth highlighting that, theoretically, this result relies upon  assumptions that the base estimators satisfy regularity conditions and are consistent. In practice, such assumptions might not hold (for e.g., when using FQE to do policy evaluation if the function approximation is under-parameterized). Nonetheless, in Section \ref{sec:empirics} we empirically illustrate that even when these assumptions are not directly satisfied, OPERA can be effective.

\subsection{Finite Sample Analysis of OPERA}
\label{sec:app:finite-sample}

Without loss of generality, let $\forall \pi \in \Pi, |J(\pi)| \leq 1$, such that we can always consider $\forall i, |\hat V^{\pi}_i| \leq 1$ (this can be trivially achieved by normalizing each estimator's output by $|V_{\max}|$).  Let $\bar V^{\pi}$ be a weighted sum of $\hat V^{\pi}_i$ with $\alpha^\star$, where the total number of estimators in the ensemble is $k$.

In the following, we show how the error in estimating the optimal weight coefficients $\alpha^*$ affects the MSE of the resulting estimator $\hat{\bar V}^{\pi}$.
Given $\{\hat V^{\pi}_i\}_{i=1}^k$, we assume that $\hat A$ obtained uisng the bootstrap procedure of OPERA will produce $\hat \alpha$ via Equation~\ref{eqn:obj} (and a resulting estimate of $\hat{\bar V}^{\pi}$). In contrast, using $A$ would have produced $\alpha^\star$ (and a resulting estimate of $\bar V^{\pi}$). To provide a finite sample characterization of OPERA's mean squared error, consider the setting where given $n$ samples in dataset $D$, there exists $\lambda > 0$, such that
\begin{align}
\forall i, \quad \mathbb E_{D_n}\brr{|\hat \alpha_i - \alpha_i^*|} \leq n^{-\lambda},\label{eqn:alpha-decay}
\end{align}
where the expectation is over the randomness due to data $D_n$ that governs the estimates $\hat V^{\pi}_i$ and thus also the weights $\hat \alpha$ and $\alpha^*$ used to combines these estimates.
We now provide a proof of Theorem~\ref{thm:finite}.
To bound the MSE of OPERA's estimate $\hat{\bar V}^{\pi}$ observe that,
\begin{align}
    \text{MSE}(\hat{\bar V}^{\pi}) &\coloneqq \mathbb E_{D_n}\brr{\br{\hat{\bar V}^{\pi} - V^{\pi}}^2} \\  
    &= \mathbb E_{D_n}\brr{\br{\hat{\bar V}^{\pi} - \bar V^{\pi} + \bar V^{\pi} - V^{\pi}}^2} \\
    &\leq \underbrace{\mathbb E_{D_n} \brr{(\hat{\bar V}^{\pi} - \bar V^{\pi})^2}}_{=\Delta_\alpha} + \underbrace{\mathbb E_{D_n} \brr{\br{ \bar V^{\pi} - V^{\pi}}^2}}_{=\Delta_c} \label{eqn:fmse1}
\end{align}
We isolate the error of $\hat{\bar V}^{\pi}$ into two terms: $\Delta_\alpha$ and $\Delta_c$. $\Delta_c$ is the gap between the best estimate OPERA can give with $\alpha^\star$ and the true estimate of the policy performance $V^{\pi}$. If $V^{\pi}$ can be expressed as a linear combination of $\hat V^{\pi}_i$, then $\Delta_c = 0$. $\Delta_\alpha$ is the term we want to further analyze because it depends on the difference between $\hat \alpha$ and $\alpha^*$.
\begin{align}
    \Delta_\alpha &\coloneqq \mathbb E_{D_n} \brr{\br{ \hat{\bar V}^{\pi} - \bar V^{\pi}}^2} \\
    &= \mathbb E_{D_n} \brr{\br{\sum_{i=1}^k \hat\alpha_i V^{\pi}_i - \sum_{i=1}^k \alpha^*_i V^{\pi}_i}^2} \\
    &= \mathbb E_{D_n} \brr{\br{\sum_{i=1}^k (\hat\alpha_i - \alpha^*_i) \hat V^{\pi}_i}^2} \\
    &\leq \mathbb E_{D_n} \brr{ \br{\sum_{i=1}^k (\hat\alpha_i - \alpha^*_i)^2} \br{ \sum_{i=1}^k (\hat V^{\pi}_i)^2}},   \label{eqn:fmse2}
\end{align}
where the last inequality follows from Cauchy-Schwarz inequality.
 Now by using the fact that $|\hat \theta_i|\leq 1$ and by plugging \eqref{eqn:alpha-decay} into \eqref{eqn:fmse2}:
\begin{align}
    \Delta_\alpha  &\leq \mathbb E_{D_n} \brr{ k \br{\sum_{i=1}^k (\hat\alpha_i - \alpha^*_i)^2} }   \\
    &= k \sum_{i=1}^k  \mathbb E_{D_n} \brr{ (\hat\alpha_i - \alpha^*_i)^2}  \\
    &\leq \frac{k^2}{n^{2\lambda}}. \label{eqn:bound2}
\end{align}
Therefore, combining \eqref{eqn:fmse1} and \eqref{eqn:bound2},
\begin{align}
    \text{MSE}(\hat{\bar V}^{\pi}) &\leq  \frac{k^2}{n^{2\lambda}} + \Delta_c. \label{eqn:msebound}
\end{align}
This bound factors the MSE using the term $\Delta_c$, which is the best a linear combination of estimators can do.
Notice that $\Delta_c \leq \min_i \text{MSE}(\hat V^{\pi}_i)$, as the best linear combination of the estimators can at least achieve the MSE of the best estimator, by assigning weight of $1$ to the best estimator and $0$ to the rest. Therefore, the rate of decay of $\Delta_c$ is bounded above by the rate of convergence of the best estimator in our ensemble.

The other term $k^2/n^{2\lambda}$ in \eqref{eqn:msebound} results due to the error in estimating $\alpha^*$ because of the bootstrapping process used for estimating $\hat A$ of $A$ in \eqref{eqn:a_hat}.
This is dependent on the number of estimators $k$ -- as we include more estimators in our ensemble, the combination weights $\alpha \in \mathbb R^k$ that need to be estimated becomes higher dimensional, thereby introducing more errors. However, the overall term decreases as the dataset size $n$ increases.
\subsection{Proofs on Properties of OPERA}
\label{apx:properties}
\subsubsection{Invariance}

In the following, we illustrate an important property of OPERA, that the resulting combined estimate $\hat{\bar V}^{\pi}$ is invariant to the addition of redundant copies of the base estimators $\{\hat V^{\pi}_i\}_{i=1}^n$. Without loss of generality, let $\hat \gV_\beta \in \mathbb R^{(K+1)\times 1}$ be the stack of unique estimators $\{\hat V^{\pi}_i\}_{i=1}^k$ with $\hat V^{\pi}_{k+1}$ being a redundant copy of the $\hat V^{\pi}_k$,
\begin{thm}[Invariance] If $\hat A$ is positive definite, then $\hat{\bar V}^{\pi}_\beta = \hat{\bar V}^{\pi}$, where,
\begin{align}
    \hat{\bar V}^{\pi}_\beta &\coloneqq \sum_{i=1}^{k+1} \beta_i^* \hat V^{\pi}_i  \in \mathbb R, & \text{where, } \beta^*  \in  \argmin_{\beta \in \mathbb R^{(k+1) \times 1} } \beta^\top B \beta. 
\end{align}
\end{thm}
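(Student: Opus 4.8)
The crux is that a redundant copy is an \emph{exact duplicate as a random variable}: $\hat V^{\pi}_{k+1}(D_{n_1}^*)=\hat V^{\pi}_k(D_{n_1}^*)$ on every bootstrap resample and $\hat V^{\pi}_{k+1}=\hat V^{\pi}_k$. Hence, when $B$ is built from $\hat\gV_\beta$ in exact analogy with $\hat A$ in \eqref{eqn:a_hat}, its $k$-th and $(k+1)$-th rows and columns are identical, so $B$ is only positive semidefinite and the constrained minimizer $\beta^*$ (over $\{\beta\in\mathbb{R}^{k+1}:\sum_i\beta_i=1\}$) is in general \emph{not} unique. The statement therefore has to be read as: $\hat{\bar V}^{\pi}_\beta=\hat{\bar V}^{\pi}$ for \emph{every} minimizer $\beta^*$, and the proof must reflect that.

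The plan is to reduce the $(k+1)$-dimensional problem to the original $k$-dimensional one via the linear ``merge'' map $\Gamma:\mathbb{R}^{k+1}\to\mathbb{R}^{k}$ with $\Gamma(\beta)_i=\beta_i$ for $i<k$ and $\Gamma(\beta)_k=\beta_k+\beta_{k+1}$. Writing $\delta\coloneqq\hat\gV_\beta(D_{n_1}^*)-\hat\gV_\beta$, the identity $\delta_{k+1}=\delta_k$ gives $\delta^\top\beta=\delta_{[1:k]}^\top\Gamma(\beta)$, where $\delta_{[1:k]}$ is precisely the displacement vector of the original $k$ estimators; squaring and taking $\mathbb{E}_{D_{n_1}^*}[\,\cdot\mid D_n]$ yields $\beta^\top B\beta=\Gamma(\beta)^\top\hat A\,\Gamma(\beta)$. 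Similarly $\mathbf{1}_{k+1}^\top\beta=\mathbf{1}_k^\top\Gamma(\beta)$, and $\Gamma$ sends the feasible hyperplane onto the feasible hyperplane \emph{surjectively}. Consequently the two constrained minima coincide (and are attained), and since $\hat A$ is positive definite the $k$-dimensional minimizer is the unique $\hat\alpha$ of \eqref{eqn:obj}; hence $\Gamma(\beta^*)=\hat\alpha$ for any minimizer $\beta^*$ of the $(k+1)$-dimensional problem.

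The conclusion then follows by unfolding the estimate and collapsing the duplicated weight:
\begin{align*}
\hat{\bar V}^{\pi}_\beta=\sum_{i=1}^{k+1}\beta_i^*\hat V^{\pi}_i
=\sum_{i=1}^{k-1}\beta_i^*\hat V^{\pi}_i+(\beta_k^*+\beta_{k+1}^*)\hat V^{\pi}_k
=\sum_{i=1}^{k}\Gamma(\beta^*)_i\,\hat V^{\pi}_i
=\sum_{i=1}^{k}\hat\alpha_i\hat V^{\pi}_i=\hat{\bar V}^{\pi},
\end{align*}
where the second equality uses $\hat V^{\pi}_{k+1}=\hat V^{\pi}_k$. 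I expect the only real subtlety to be the non-uniqueness of $\beta^*$: one must verify that the objective \emph{and} the constraint genuinely factor through $\Gamma$ (so splitting weight between the two copies cannot buy a smaller objective value) and that $\Gamma$ is onto the feasible set (so the reduction loses nothing); everything else is bookkeeping. The same argument, applied inductively, extends the invariance to any finite number of duplicated estimators, and with a slightly more careful null-space argument to any estimator that is an exact linear combination of the others.
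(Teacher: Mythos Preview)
Your proof is correct and rests on the same core observation as the paper's: the objective and the simplex constraint depend on $\beta$ only through the collapsed vector $(\beta_1,\ldots,\beta_{k-1},\beta_k+\beta_{k+1})$, so the $(k{+}1)$-dimensional problem is equivalent to the original $k$-dimensional one, and positive definiteness of $\hat A$ pins down a unique $\hat\alpha=\Gamma(\beta^*)$ even though $\beta^*$ itself is not unique. The paper establishes equality of the two optimal values by contradiction (reassigning weights back and forth to show neither minimum can be strictly smaller), then invokes PD of $\hat A$ for uniqueness; you instead build the explicit linear merge map $\Gamma$, verify $\beta^\top B\beta=\Gamma(\beta)^\top\hat A\,\Gamma(\beta)$ and $\mathbf{1}_{k+1}^\top\beta=\mathbf{1}_k^\top\Gamma(\beta)$, and use surjectivity of $\Gamma$ onto the feasible hyperplane to transfer minimizers directly. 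Your route is a bit more structural and makes the handling of non-unique $\beta^*$ explicit from the outset; it also points the way to the extensions you mention (multiple duplicates, exact linear dependence among estimators via a null-space argument), which the paper does not discuss.
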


\begin{proof}
    We prove this by contradiction.
    Recall that $\hat \alpha \in \mathbb R^k$ are the weights that minimize the bootstrap estimate of MSE of $\hat{\bar V}^{\pi}$ consisting of $k$ estimators. 
    \begin{align}
        \widehat {\text{MSE}}(\hat \alpha_1 \hat V^{\pi}_1 +...+ \hat \alpha_k \hat V^{\pi}_k  ) = \hat \alpha ^\top \hat A \hat \alpha. \label{eqn:msea}
    \end{align}
    As $\hat V^{\pi}_{k+1}$ is a redundant copy of $\hat V^{\pi}_k$,
    \begin{align}
        &\widehat {\text{MSE}}(\beta_1^* \hat V^{\pi}_1 +...+ \beta_k^* \hat V^{\pi}_k + \beta_{k+1}^* \hat V^{\pi}_{k+1} ) \\ &=          \widehat {\text{MSE}}(\beta_1^* \hat V^{\pi}_1 +...+ (\beta_k^* + \beta_{k+1}^*) \hat V^{\pi}_k ) \label{eqn:mseb}
    \end{align}
    Finally, as $\beta^* \in \mathbb R^{k+1}$ is the weight that minimizes the bootstrap estimate of MSE of $\hat{\bar V}^{\pi}_\beta $.
    Now, if \eqref{eqn:msea} $<$ \eqref{eqn:mseb}, then one could assign $\beta_i^* \coloneqq \hat \alpha_i$ for $i \in \{1,...,k\}$, and $\beta_{k+1}^*=0$ to make \eqref{eqn:mseb} $=$ \eqref{eqn:msea}. Further, notice that as both $\hat \alpha$ and $\beta^*$ are within the same feasible set of solutions, the above reassignment is also within the feasible set of solutions.
    Similarly, if \eqref{eqn:msea} $>$ \eqref{eqn:mseb}, then one could assign  $\hat \alpha_i \coloneqq \beta_i^* $ for $i \in \{1,...,k-1\}$, and $\hat \alpha_k = \beta_k^* + \beta_{K+1}^*$ to make \eqref{eqn:mseb} $=$ \eqref{eqn:msea}.
    Hence, if \eqref{eqn:msea} does not equal \eqref{eqn:mseb}, then either $\hat \alpha$ or $\beta^*$ is not optimal and that would be a contradiction. This ensures that $\widehat{\text{MSE}}( \hat{\bar V}^{\pi}_\beta ) = \widehat{\text{MSE}}(  \hat{\bar V}^{\pi} ) $.
    
    As $\hat A$ is positive definite, it implies that \eqref{eqn:obj} is strictly convex with linear constraints. Thus the minimizer $\hat \alpha$ of \eqref{eqn:obj} is unique, and $ \hat{\bar V}^{\pi}_\beta =\hat{\bar V}^{\pi}$.
    Note that due to redundancy, $B$ will not be PD despite $\hat A$ being PD. This would imply that there can be multiple values of $\beta^*_{k}$ and $\beta^*_{k+1}$. Nonetheless, since $\beta^*_{k} + \beta^*_{k+1} = \hat \alpha_k$, it implies that $ \hat{\bar V}^{\pi}_\beta =\hat{\bar V}^{\pi}$.

\end{proof}

\subsubsection{Performance Improvement} 

\begin{thm}[Performance improvement] If $\hat \alpha = \alpha^*$,
\thlabel{thm:app:perfimp}
    \begin{align}
  \forall i \in \{1,...,k\}, \quad      \text{MSE}(\hat{\bar V}^{\pi}) \leq \text{MSE}(\hat V^{\pi}_i).
    \end{align}
\end{thm}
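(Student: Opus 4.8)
The plan is to reduce everything to the quadratic-form characterization of the MSE that was already established. First I would observe that when $\hat\alpha = \alpha^*$ we have $\hat{\bar V}^{\pi} = \bar V^{\pi} = \sum_{i=1}^k \alpha_i^* \hat V^{\pi}_i$, so by the computation leading to \eqref{eqn:alphaA} (which used only $\sum_i \alpha_i = 1$, a constraint $\alpha^*$ satisfies),
\begin{align}
\text{MSE}(\hat{\bar V}^{\pi}) = \text{MSE}(\bar V^{\pi}) = (\alpha^*)^\top A \alpha^*,
\end{align}
where $A = \E[(\hat\gV - \gV)(\hat\gV - \gV)^\top]$.

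Next I would identify the right-hand side of the claim with a diagonal entry of $A$: by definition of $A$, its $(i,i)$ entry is $A_{ii} = \E[(\hat V^{\pi}_i - V^{\pi})^2] = \text{MSE}(\hat V^{\pi}_i)$, matching \eqref{eqn:mse}. Then for the standard basis vector $e_i \in \mathbb R^{k\times 1}$ we have $e_i^\top A e_i = A_{ii} = \text{MSE}(\hat V^{\pi}_i)$, and crucially $e_i$ is feasible for the optimization in \thref{thm:alphamin} since $\sum_{j} (e_i)_j = 1$.

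Finally, since $\alpha^*$ is by definition a minimizer of $\alpha \mapsto \alpha^\top A \alpha$ over the feasible set $\{\alpha : \sum_j \alpha_j = 1\}$ and $e_i$ lies in that set, optimality gives $(\alpha^*)^\top A \alpha^* \le e_i^\top A e_i$. Chaining the three displays yields $\text{MSE}(\hat{\bar V}^{\pi}) = (\alpha^*)^\top A \alpha^* \le e_i^\top A e_i = \text{MSE}(\hat V^{\pi}_i)$ for every $i$, as desired. There is no real obstacle here — the only thing to be careful about is that the MSE-as-quadratic-form identity \eqref{eqn:alphaA} relies on the affine constraint $\sum_i \alpha_i = 1$, which both $\alpha^*$ and each $e_i$ satisfy, so the argument is internally consistent; everything else is a one-line consequence of $\alpha^*$ being a constrained minimizer.
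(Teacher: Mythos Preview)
Your proposal is correct and follows essentially the same approach as the paper: both arguments use the identity $\text{MSE}(\bar V^{\pi}(\alpha)) = \alpha^\top A \alpha$ for feasible $\alpha$, note that the standard basis vector $e_i$ lies in the feasible set with $e_i^\top A e_i = \text{MSE}(\hat V^{\pi}_i)$, and conclude by the optimality of $\alpha^*$. The only cosmetic difference is that the paper phrases the $e_i$ step via $\bar V^{\pi}(e_i) = \hat V^{\pi}_i$ rather than via the diagonal entry $A_{ii}$, but the logic is identical.
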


\begin{proof}
With a slight overload of notation\footnote{Note that in Section A.4, we used $\bar V^{\pi}$ to denote $\bar V^{\pi}(\alpha^*)$. Here we make this dependence explicit.}, we make the dependency of weights $\alpha$ explicit and let $\bar V^{\pi}(\alpha) = \sum_{i=1}^k \alpha_i \hat V^{\pi}_i$. 
Let $\text{MSE}(\bar V^{\pi}(\alpha)) \coloneqq \alpha^\top A \alpha$, where $A$ is defined as in \eqref{eqn:alphamin}.

Now from \eqref{eqn:mse} and \eqref{eqn:alphamin}, we know that for $\sum_{i=1}^k \alpha_i = 1$, 
\begin{align}
    \alpha^* \in \argmin_{\alpha \in \R^{k\times 1}} \text{MSE}(\bar V^{\pi}(\alpha)).
\end{align}
Therefore, for any $\lambda \in \R^{k\times 1}$ such that $\sum_{i=1}^k \lambda_i = 1$,  
\begin{align}
    \text{MSE}(\bar V^{\pi}(\hat \alpha)) &= \text{MSE}(\bar V^{\pi}(\alpha^*)) &\because \hat \alpha = \alpha^*
    \\
    &\leq \text{MSE}(\bar V^{\pi}(\lambda)).
\end{align}
Notice that for $e_i \coloneqq [0,0,..,1,..,0]$, where there is a $1$ in the $i^{th}$ position and zero otherwise, $\bar V^{\pi}(e_i) = \hat V^{\pi}_i$.
Therefore, 
\begin{align}
    \text{MSE}(\bar V^{\pi}(\hat \alpha)) &\leq \text{MSE}(\bar V^{\pi}(e_i)) &\forall i\\
    &=  \text{MSE}(\hat V^{\pi}_i) &\forall i.
\end{align}
Therefore, as $\hat{\bar V}^{\pi} = \bar V^{\pi}(\hat \alpha)$, we have the desired result that $ \forall i \in \{1,...,k\}, \quad      \text{MSE}(\hat{\bar V}^{\pi}) \leq \text{MSE}(\hat V^{\pi}_i)$.
\end{proof}

\subsection{Empirical Properties of OPERA}

In Figure~\ref{fig:properties-opera}: \textbf{(a)} We show that as dataset sizes increase, our bootstrap estimation of MSE approaches true MSE for each OPE estimator. \textbf{(b)}  We show the MSE of OPERA with true and estimated $A$ matrix. Note both $\text{MSE}(\bar V^{\pi})$ and $\widehat {\text{MSE}}(\hat{\bar V}^{\pi})$ are near 0 and overlap each other. \textbf{(c)} We show how $\alpha$ changes between different estimators as dataset size grows.

\begin{figure}[h]
    \centering
    \begin{subfigure}[t]{0.32\textwidth}
    \includegraphics[width=\textwidth]{figures/sepsis_pomdp_analysis_fig1.pdf}
    \caption{}
    \label{fig:analysis_fig1}
    \end{subfigure}
    \begin{subfigure}[t]{0.32\textwidth}
    \includegraphics[width=\textwidth]{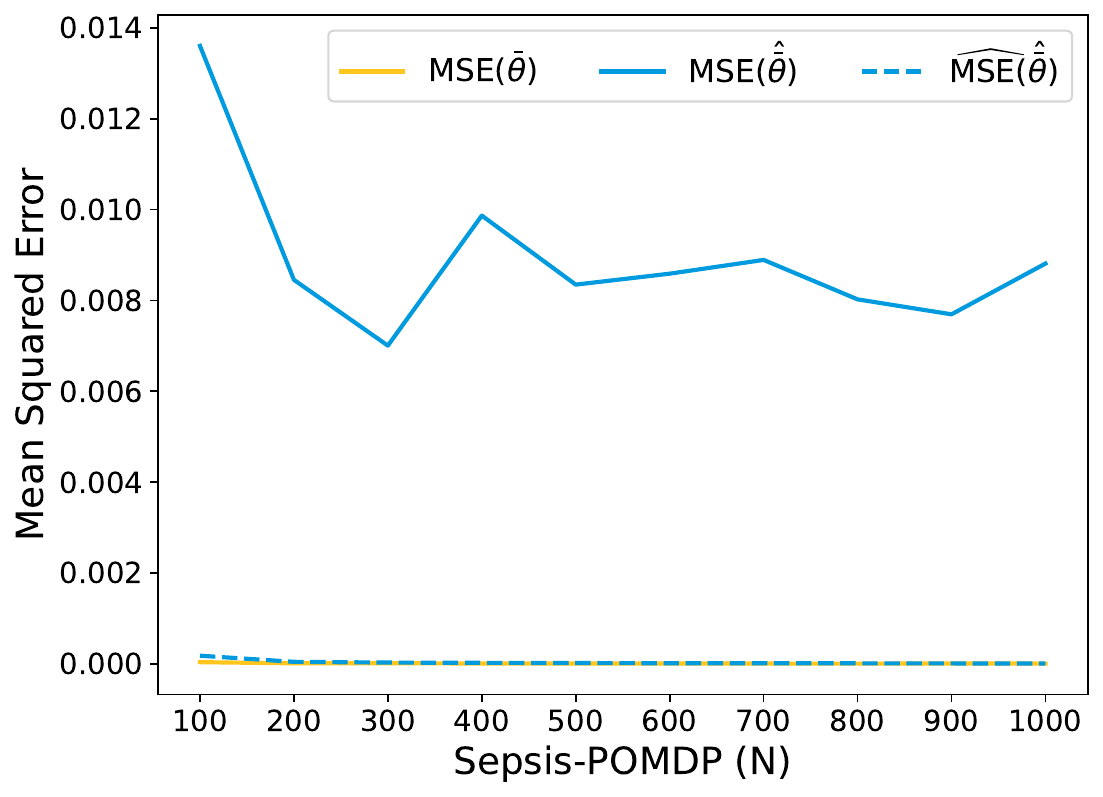}
      \caption{}
    \label{fig:analysis_fig2}
    \end{subfigure}
      \begin{subfigure}[t]{0.32\textwidth}
    \includegraphics[width=\textwidth]{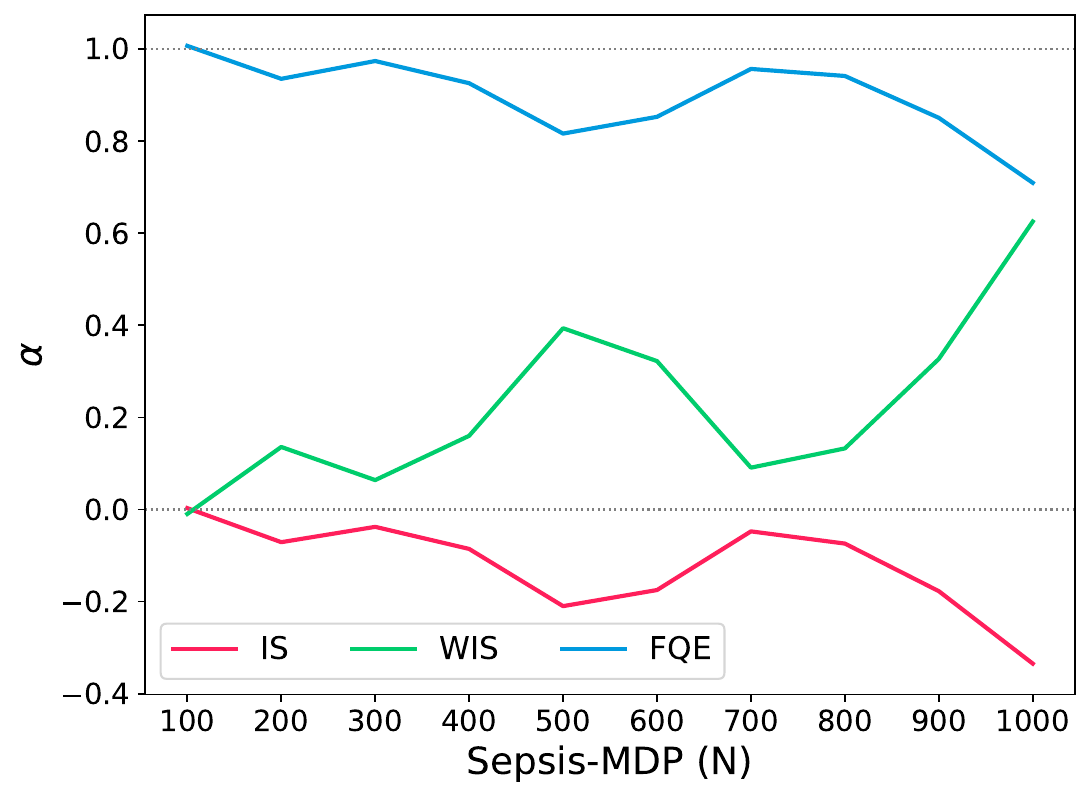}
    \caption{}
  \label{fig:analysis_fig3}
  \end{subfigure}
  \caption{Properties of OPERA}
  \label{fig:properties-opera}
\end{figure}

\subsection{Graph Experiment}
\label{app:sec:graph}

\citet{voloshin2019empirical} introduced a ToyGraph environment with a horizon length T and an absorbing state $x_{\text{abs}} = 2T$. 
Rewards can be deterministic or stochastic, with +1 for odd states, -1 for even states plus one based on the penultimate state.%
We evaluate the considered methods on a short horizon H=4, varying stochasticity of the reward and transitions, and MDP/POMDP settings. We evaluate a single policy in this domain. 

\begin{table*}[bpth]
\centering
\caption{We report the Mean-Squared Error (MSE) for the Graph domain. We conduct the experiment on Graph over 10 trials. We underscore the estimator that has the lowest MSE in the ensemble.}
\vspace{2mm}
\small
\begin{tabular}{llrrrrr}
\toprule
Stochasticity & Observability & OPERA & BestOPE & AvgOPE & IS & WIS \\
\midrule
Deterministic & MDP &\textbf{ 0.0339} & 0.0509 & 0.2872 & 0.7398 & \underline{0.0509} \\
Stochastic & MDP & \textbf{0.4625} & 0.4838 & 0.7021 & 1.0803 & \underline{0.4755} \\ \midrule 
Deterministic & POMDP & 1.4651 & \textbf{1.2193} & 2.3425 & 6.6273 & \underline{0.4487} \\
Stochastic & POMDP & \textbf{0.3327} & 0.3516 & 0.3889 & 0.5634 & \underline{0.3516} \\
\bottomrule
\end{tabular}
\label{tab:graph}
\end{table*}

We report the results in Table~\ref{tab:graph}. In three out of four setups, OPERA is able to produce an estimate that has a lower MSE compared to BestOPE and AvgOPE and is lower than the estimators used in the ensemble. 
In the deterministic POMDP setting, the IS estimate is significantly off, and OPERA is worse than BestOPE.
This provides an important insight into the strengths and limitations of our particular procedure. 
In small tabular POMDPs, WIS can be quite good. Here, the error in the MSE estimation means that OPERA inaccurately balances the two estimators instead of relying only on WIS.
We note that BestOPE does not match the performance of WIS in the ensemble. This is because the error in MSE estimation makes BestOPE erroneously choose the wrong estimator in some trials, resulting in a larger MSE.
We note that on three other setups, the MSE estimation is fairly accurate.
Therefore, OPERA is able to get a lower MSE than any of the OPEs in the ensemble. 

These results suggest that it is important to consider the tradeoff between bootstrap estimation error and the benefit from combined estimators.
When we compare the estimation quality over many policies (Sepsis and D4RL), OPERA does well, but for a particular policy, it might not outperform other estimators. This is an interesting area for future work: if and when it is possible for such a meta-algorithm to provably match the performance of any individual estimator without further assumptions.

\begin{table}[t]
\centering
\caption{Root Mean-Squared Error (RMSE) of the FQE estimators with different hyperparameter configurations.}
\vspace{2mm}
\begin{tabular}{l|cccc}
\toprule
Env/Dataset & FQE 1 & FQE 2 & FQE 3 & FQE 4 \\
\midrule
\textbf{Hopper} \\ \midrule
medium-replay & 30.2 & 15.5 & 133.5 & 153.4 \\
medium & 52.2 & 12.5 & 242.9 & 237.6 \\ \midrule
\textbf{HalfCheetah} \\ \midrule
medium-replay & 126.0 & 65.0 & 439.7 & 318.8 \\
medium & 158.6 & 111.8 & 491.6 & 386.5 \\ \midrule
\textbf{Walker2d} \\ \midrule
medium-replay & 185.8 & 167.4 & 301.6 & 167.7 \\
medium & 184.9 & 192.0 & 406.7 & 183.8 \\
\bottomrule
\end{tabular}
\label{tab:d4rl-fqe}
\end{table}

\subsection{D4RL Experiment}
\label{sec:app:d4rl}

\paragraph{Setup}
D4RL~\citep{fu2020d4rl} is an offline RL standardized benchmark designed and commonly used to evaluate the progress of offline RL algorithms. We use 6 datasets of different quality from three environments: Hopper, HalfCheetah, and Walker2d. We choose the medium and medium-replay datasets. Medium dataset has 200k samples from a policy trained to approximately 1/3 the performance of a policy trained to completion with SAC. Medium-replay dataset takes the transitions stored in the experience replay buffer of policy -- this dataset can be thought of as a dataset sampled by a mixture of policies.

\paragraph{Policy Training} 
We train 6 policies from these three
algorithms with 2 different hyperparameters for the neural network, Q-learning (CQL)~\citep{kumar2020conservative}, implicit Q-learning~\citep{kostrikov2021offline}, and TD3+BC~\citep{fujimoto2018addressing}. We initialize all neural networks (including both actor and critics, if the algorithm uses both) with the hidden dimensions of [256, 256, 256]. We train with a batch size of 512, with Adam Optimizer. We train for 100 epochs on each dataset. We only change one important hyperparameter per algorithm. We report the discounted return of each policy in Table~\ref{tab:app:walker2d},\ref{tab:app:hopper},\ref{tab:app:halfcheetah}. We report these scores because they are the prediction target of the FQE algorithm.
We report the un-discounted return of each policy in Table~\ref{tab:app:no_disc_hopper},\ref{tab:app:no_disc_walker2d},\ref{tab:app:no_disc_halfcheetah}.

\begin{table*}[h]
    \centering
    \begin{minipage}{0.33\textwidth}
        \centering
        \begin{tabular}{@{}cc@{}}
        \toprule
        Alg & Initial $\alpha$ \\ \midrule
        CQL 1   & 1.0              \\
        CQL 2   & 10               \\ \bottomrule
        \end{tabular}
        \caption{}
    \end{minipage}%
    \begin{minipage}{0.33\textwidth}
        \centering
        \begin{tabular}{@{}cc@{}}
        \toprule
        Alg & Expectile \\ \midrule
        IQL 1   & 0.7       \\
        IQL 2   & 0.5       \\ \bottomrule
        \end{tabular}
        \caption{}
    \end{minipage}
    \begin{minipage}{0.33\textwidth}
        \centering
        \begin{tabular}{@{}cc@{}}
        \toprule
        Alg & Alpha \\ \midrule
        TD3+BC 1      & 0.7   \\
        TD3+BC 2      & 0.5   \\ \bottomrule
        \end{tabular}
        \caption{}
    \end{minipage}
\end{table*}

\begin{table*}[h]
    \centering
    \begin{minipage}{0.45\textwidth}
        \centering
        \begin{tabular}{lcc}
            \toprule
            Policy & \begin{tabular}[c]{@{}c@{}}Hopper \\ (medium-replay)\end{tabular} & \begin{tabular}[c]{@{}c@{}}Hopper \\ (medium)\end{tabular} \\
            \midrule
            CQL 1 & 193.47 & 242.24 \\
            CQL 2 & 123.76 & 243.57 \\ \midrule 
            IQL 1 & 239.20 & 246.26 \\
            IQL 2 & 239.85 & 240.05 \\ \midrule 
            TD3+BC 1 & 183.48 & 231.81 \\
            TD3+BC 2 & 208.16 & 234.19 \\
            \bottomrule
        \end{tabular}
        \caption{Discounted perf of different policies on Hopper task.}
        \label{tab:app:hopper}
    \end{minipage}\hfill
    \begin{minipage}{0.45\textwidth}
        \centering
        \begin{tabular}{lcc}
            \toprule
            Policy & \begin{tabular}[c]{@{}c@{}}Walker2D \\ (medium-replay)\end{tabular} & \begin{tabular}[c]{@{}c@{}}Walker2D \\ (medium)\end{tabular} \\
            \midrule
            CQL 1 & 252.68 & 85.39 \\
            IQL 1 & 238.77 & 253.19 \\ \midrule 
            IQL 2 & 130.29 & 243.51 \\
            CQL 2 & 247.03 & 198.92 \\ \midrule 
            TD3+BC 1 & 211.28 & 247.22 \\
            TD3+BC 2 & 183.38 & 237.85 \\
            \bottomrule
        \end{tabular}
        \caption{Discounted perf of different policies on Walker2D task.}
        \label{tab:app:walker2d}
    \end{minipage}
\end{table*}

\begin{table*}[h]
\centering
\begin{tabular}{lcc}
\toprule
Policy & \begin{tabular}[c]{@{}c@{}}HalfCheetah \\ (medium-replay)\end{tabular} & \begin{tabular}[c]{@{}c@{}}HalfCheetah \\ (medium)\end{tabular} \\
\midrule
CQL 1 & 363.35 & 601.59 \\
IQL 1 & 394.06 & 436.52 \\ \midrule 
IQL 2 & 362.65 & 423.37 \\
CQL 2 & 354.23 & 539.03 \\ \midrule 
TD3+BC 1 & 407.96 & 441.20 \\
TD3+BC 2 & 318.22 & 422.65 \\
\bottomrule
\end{tabular}
\caption{Discounted perf of different policies on HalfCheetah task.}
\label{tab:app:halfcheetah}
\end{table*}

\paragraph{FQE Training}
We train Fitted Q learning for each policy. As discussed in the main text, FQE has a few hyperparameter choices. We choose 4 hyperparameters for Hopper and HalfCheetah. We choose another 4 hyperparameters for Walker2D. The reason is that we noticed the Q-value for Walker2D exploded if we used the same hyperparameters for the two other tasks. We should note that since OPERA does not require OPEs to be the same across tasks. The hyperparameter choices are around the Q-function neural network's hidden sizes and how many epochs we train each Q-function. Generally, training too long / over-training leads to exploding Q-values.

\begin{table*}[h]
\centering
\begin{minipage}{0.45\textwidth}
\centering
\begin{tabular}{@{}ccc@{}}
\toprule
\begin{tabular}[c]{@{}c@{}}Hopper/\\ HalfCheetah\end{tabular} & \begin{tabular}[c]{@{}c@{}}Q-Function\\ Network\end{tabular} & \begin{tabular}[c]{@{}c@{}}Training\\ Epochs\end{tabular} \\ \midrule
FQE 1                                                         & [256, 256, 256]                                              & 2     \\ \midrule
FQE 2                                                         & [256, 256, 256]                                              & 3     \\ \midrule
FQE 3                                                         & [512, 512]                                                   & 1     \\ \midrule
FQE 4                                                         & [512, 512]                                                   & 2     \\ 
\bottomrule
\end{tabular}
\caption{FQE Hyperparameters. Training epochs were chosen to be an early checkpoint and a late checkpoint (before exploding Q-values).}
\end{minipage}\hfill
\begin{minipage}{0.45\textwidth}
\centering
\begin{tabular}{@{}ccc@{}}
\toprule
Walker2D & \begin{tabular}[c]{@{}c@{}}Q-Function\\ Network\end{tabular} & \begin{tabular}[c]{@{}c@{}}Training\\ Epochs\end{tabular} \\ \midrule
FQE 1                                                         & [128, 256, 512]                                              & 2     \\ \midrule
FQE 2                                                         & [128, 256, 512]                                              & 5     \\ \midrule
FQE 3                                                         & [512, 512]                                                   & 1     \\ \midrule
FQE 4                                                         & [512, 512]                                                   & 2     \\ 
\bottomrule
\end{tabular}
\caption{FQE Hyperparameters. Training epochs were chosen to be an early checkpoint and a late checkpoint (before exploding Q-values).}
\end{minipage}
\end{table*}

\begin{table*}[h]
    \centering
    \begin{minipage}{0.45\textwidth}
        \centering
        \begin{tabular}{lcc}
            \toprule
            Policy & \begin{tabular}[c]{@{}c@{}}Hopper \\ (medium-replay-v2)\end{tabular} & \begin{tabular}[c]{@{}c@{}}Hopper \\ (medium-v2)\end{tabular} \\
            \midrule
            CQL 1 & 433.40 & 2550.03 \\
            CQL 2 & 439.56 & 2787.95 \\ \midrule 
            IQL 1 & 3144.02 & 1768.19 \\
            IQL 2 & 2177.90 & 2028.27 \\ \midrule 
            TD3 1 & 1104.04 & 1977.88 \\
            TD3 2 & 910.26 & 1751.87 \\
            \bottomrule
        \end{tabular}
        \caption{Undiscounted perf of different policies on Hopper task.}
        \label{tab:app:no_disc_hopper}
    \end{minipage}\hfill
    \begin{minipage}{0.45\textwidth}
        \centering
        \begin{tabular}{lcc}
            \toprule
            Policy & \begin{tabular}[c]{@{}c@{}}Walker2D \\ (medium-replay-v2)\end{tabular} & \begin{tabular}[c]{@{}c@{}}Walker2D \\ (medium-v2)\end{tabular} \\
            \midrule
            CQL 1 & 3732.01 & 145.25 \\
            IQL 1 & 2383.09 & 3044.03 \\ \midrule 
            IQL 2 & 776.79 & 3194.87 \\
            CQL 2 & 3073.49 & 1409.01 \\ \midrule 
            TD3 1 & 2250.07 & 3920.79 \\
            TD3 2 & 1656.82 & 3732.23 \\
            \bottomrule
        \end{tabular}
        \caption{Undiscounted perf of different policies on Walker2D task.}
        \label{tab:app:no_disc_walker2d}
    \end{minipage}
\end{table*}

\begin{table*}[h]
    \centering
    \begin{tabular}{lcc}
        \toprule
        Policy & \begin{tabular}[c]{@{}c@{}}HalfCheetah \\ (medium-replay-v2)\end{tabular} & \begin{tabular}[c]{@{}c@{}}HalfCheetah \\ (medium-v2)\end{tabular} \\
        \midrule
        CQL 1 & 4053.04 & 7894.69 \\
        CQL 2 & 4192.01 & 6875.66 \\ \midrule 
        IQL 1 & 4995.02 & 5704.12 \\
        IQL 2 & 4657.00 & 5475.88 \\ \midrule 
        TD3 1 & 5324.46 & 5758.83 \\
        TD3 2 & 5002.90 & 5420.27 \\
        \bottomrule
    \end{tabular}
    \caption{Undiscounted perf of different policies on HalfCheetah task.}
    \label{tab:app:no_disc_halfcheetah}
\end{table*}

\subsection{Sepsis and Graph Experiment Details}
\label{sec:app:graph}

\subsubsection{Sepsis}
The first domain is based on the simulator and works by \cite{oberst2019counterfactual} and revolves around treating sepsis patients. The goal of the policy for this simulator is to discharge patients from the hospital. There are three treatments the policy can choose from antibiotics, vasopressors, and mechanical ventilation. The policy can choose multiple treatments at the same time or no treatment at all, creating 8 different unique actions. 

The simulator models patients as a combination of four vital signs: heart rate, blood pressure, oxygen concentration and glucose levels, all with discrete states (for example, for heart rate low, normal and high). There is a latent variable called diabetes that is present with a $20\%$ probability which drives the likelihood of fluctuating glucose levels. When a patient has at least 3 of the vital signs simultaneously out of the normal range, the patient dies. If all vital signs are within normal ranges and the treatments are all stopped, the patient is discharged. The reward function is $+1$ if a patient is discharged, $-1$ if a patient dies, and $0$ otherwise. We truncate the trajectory to 20 actions (H=20). For this simulator, early termination means we don't get to observe a positive or negative return on the patient.

We follow the process described by \cite{oberst2019counterfactual} to marginalize an optimal policy's action over 2 states: glucose level and whether the patient has diabetes. This creates the Sepsis-POMDP environment. We sample 200 and 1000 patients (trajectories) from Sepsis-POMDP environment with the optimal policy that has 5\% chance of taking a random action. We also sample trajectories from the original MDP using the same policy; we call this the Sepsis-MDP environment.

\paragraph{FQE Training} We use tabular FQE. Therefore, there is no representation mismatch. We additionally use cross-fitting, a form of procedure commonly used in causal inference~\citep{chernozhukov2016double}. Cross-fitting is a sample-splitting procedure where we swap the roles of main and auxiliary samples to obtain multiple estimates and then average the results. The main goal of cross-fitting is to reduce overfitting. We notice significant performance improvement of our FQE estimator after using cross-fitting. We present the RMSE of each of our trained FQE estimator in Table~\ref{tab:d4rl-fqe}.

\subsubsection{Graph}

For the graph environment, we set the horizon H=4, with either POMDP or MDP and ablate on the stochasticity of transition and reward function. The optimal policy for the Graph domain is simply the policy that chooses action 0. All the experiments reported have 512 trajectories.

\subsection{Compute Resource}

The training was done on a small cluster of 6 servers, each with 16GB RAM, and 4-8 GPUs of Nvidia A5000. D4RL was the most computationally expensive experiment.


\end{document}